\newcommand{\bS}{\mathbf{S}}
\newcommand{\bpi}{\bm{\pi}}
\newcommand{\bc}{\mathbf{c}}
\newcommand{\real}{\mathbb{R}}
\newcommand{\bQ}{\textbf{Q}}
\newcommand{\bw}{\textbf{w}} 
\newcommand{\bx}{\textbf{x}}
\newcommand{\bq}{\textbf{q}}
\newcommand{\bzero}{\mathbf{0}} 
\newcommand{\bH}{\textbf{H}} 
\newcommand{\bM}{\textbf{M}}
\newcommand{\bX}{\textbf{X}} 
\newcommand{\bW}{\textbf{W}} 
\newcommand{\bZ}{\textbf{Z}} 
\newcommand{\bWhat}{\hat{\textbf{W}}} 
\newcommand{\bE}{\textbf{E}} 
\newcommand{\be}{\textbf{e}} 
\newcommand{\bA}{\textbf{A}} 
\newcommand{\bB}{\textbf{B}} 
\newcommand{\bY}{\textbf{Y}} 
\newcommand{\lambdahat}{\hat\lambda}
\DeclareMathOperator{\Loss}{L}
\newcommand{\bXr}{\bX^{r}}
\newcommand{\bXnr}{\bX^{\bar{r}}}
\newcommand{\T}{\intercal}
\newcommand{\E}{\mathbb{E}}
\newcommand{\prob}{\mathbb{P}}
\newcommand{\eval}{\Lambda}
\newcommand{\bHhat}{\hat{\bH}}
\newcommand{\nbr}{\bS_r}
\newcommand{\calC}{\mathcal{C}}
\newcommand{\calZ}{\mathcal{Z}}
\newcommand{\calF}{\mathcal{F}}
\newcommand{\encode}{\mathcal{E}}
\newcommand{\calB}{\mathcal{B}}
\newcommand{\tB}{\text{B}}
\DeclareMathOperator{\f}{vec}
\DeclareMathOperator{\Prev}{prev}
\newtheorem{theorem}{Theorem} 
\newtheorem{lemma}{Lemma}
\newtheorem{assumption}{Assumption} 
\newtheorem{definition}{Definition}
\title{Learning discrete Bayesian networks in polynomial time and sample complexity}
\author{{\bf Adarsh Barik} \\
	Department of Computer Science \\
	Purdue University \\
	\and
	{\bf Jean Honorio} \\
	Department of Computer Science \\
	Purdue University \\  }
\date{}
\begin{document}

\maketitle

\begin{abstract}
  In this paper, we study the problem of structure learning for Bayesian networks in which nodes take discrete values. The problem is NP-hard in general but we show that under certain conditions we can recover the true structure of a Bayesian network with sufficient number of samples. We develop a mathematical model which does not assume any specific conditional probability distributions for the nodes. We use a primal-dual witness construction to prove that, under some technical conditions on the interaction between node pairs, we can do exact recovery of the parents and children of a node by performing group $\ell_{12}$-regularized multivariate regression. Thus, we recover the true Bayesian network structure. If degree of a node is bounded then the sample complexity of our proposed approach grows logarithmically with respect to the number of nodes in the Bayesian network. Furthermore, our method runs in polynomial time.
\end{abstract}

\section{Introduction} 
\label{sec:introduction}

\paragraph{Motivation.} Probabilistic graphical models provide a framework to model complex systems. They use graphs to represent variables along with their conditional dependencies and enable us to formally understand the interaction among different variables. Based on the type of the graph and modeling of the conditional dependencies, there are various classes of graphical models. One of the important classes are \emph{Bayesian networks} \cite{koller2009probabilistic, lauritzen1996graphical} which use a directed acyclic graph (DAG) to encode relationships among the variables. The variables are represented as nodes of the DAG and a directed edge from node $i$ to node $j$ denotes that node $i$ is a parent of node $j$. 

\begin{definition}[Bayesian Network]
	A Bayesian network  on $n$ random variables $\mathbb{X} = \{X_1, X_2,\dots,X_n\}$ is a DAG that specifies a joint distribution over $\mathbb{X}$ as a product of conditional probability functions $\prob(X_r | X_{\bpi_r})$, one for each variable $X_r$ given its set of parents $\bpi_r \subseteq \{ 1,\dots, n \}$. The joint probability
distribution over all nodes is given by:
\begin{align}
\label{eq:bayesian_network}
\prob(\mathbb{X}) = \prod_{r=1}^n \prob(X_r | X_{\bpi_r}) 
\end{align}
where $(\forall r, X_{\bpi_r}) {\rm\ } \sum_{X_r} \prob(X_r | X_{\bpi_r}) = 1$ and therefore
$\prob(\mathbb{X})$ is valid, i.e., $\sum_{\mathbb{X}} \prob(\mathbb{X}) = 1$.
\end{definition}

It is quite common to see categorical variables in the real systems. For example, the country of residence of a person may take values from a set \{United States, China, England, \ldots \}. Consequently, categorical random variables frequently appear in Bayesian networks. Since these variables are not ordinal, it becomes important that we do not introduce any artificial ordering while using them in our mathematical model. In this paper, we propose a method to learn the skeleton of a Bayesian network where all the nodes are categorical random variables. 
\paragraph{Related work.} The structure of a Bayesian network provides great insights into the complex interactions amongst variables. Thus, a considerable amount of work has been done in this field and several different methods have been proposed to learn Bayesian networks from data. We can broadly divide these methods in two categories. First, there are methods which learn the DAG from data by maximizing a well defined score. In this category, there are some heuristics based approaches such as \cite{friedman1999learning,tsamardinos2006max, margaritis2000bayesian, moore2003optimal}. There are also some exact but exponential-time score maximizing algorithms such as \cite{koivisto2004exact, silander2012simple, cussens2012bayesian, cussens2012bayesiancut,jaakkola2010learning}. Second, there are independence test based methods which determine the edge between two nodes by conducting dependence or independence tests.  For example, \cite{spirtes2000causation, cheng2002learning, yehezkel2005recursive, xie2008recursive} use this approach. 
There are also some results available for special classes of Bayesian networks. Ghoshal et al.~\cite{ghoshal2017learning} provide polynomial sample and time complexities guarantees for structure learning in Gaussian Bayesian networks. A more general result is also provided for linear structural equation models in~\cite{ghoshal2017SEMlearning}. For discrete variables, Park et al.~\cite{park2015learning} provide statistical guarantees for recovery of the node ordering in polynomial time and sample complexities for Poisson-distributed variables. More general results are also provided for other ordinal variables with binomial, geometric, exponential and gamma distributions in \cite{park2017learning}. However, as stated by the authors, their method does not work for Bernoulli or multinomial distributions.  Brenner et al.~\cite{brenner2013sparsityboost} proposed a method which works with binary variables exclusively. Their proposed method has a sample complexity  of the order $O(n^2)$, where $n$ is number of nodes in the Bayesian network. Note although that the method of \cite{brenner2013sparsityboost} is worst-case exponential time.

\paragraph{Learning Bayesian networks is hard.} The problem of learning the structure of a Bayesian network from data is amongst the hardest problems to be solved, from the computational viewpoint. Independence test methods require a number of tests that grows exponentially in the number of nodes, in the worst case. It is also known that finding the structure of a Bayesian network by score maximization techniques is NP-hard~\cite{chickering2004large}. Thus, unless the long standing problem of P vs. NP is resolved, the problem remains intractable in its general form. This implies that we need to work within the limits of some technical assumptions to solve the problem of structure learning of a Bayesian network with provable computational and statistical efficiency guarantees. 

\paragraph{Contributions.} 
In summary, we make the following contributions in this paper:
	\begin{enumerate}
		\item We formulate the structure recovery problem as a block $\ell_{12}$-regularized multivariate regression problem. We do not assume that the categorical variables are ordinal. Our formulation is also independent of any specific conditional probability distribution for the nodes.
		\item We obtain sufficient conditions for DAG recovery by controlling the interaction between node pairs for arbitrary conditional probability distributions.
		\item We show that if our DAG recovery conditions are satisfied then the sample complexity of our method is logarithmic with respect to the number of nodes. Since our method uses the interior point algorithm, it also runs in polynomial time.  
	\end{enumerate}

\section{Preliminaries}
\label{subsec:notation}

In this section, we introduce formal definitions and notations. We define a Bayesian network on a DAG $G(V, E)$ where $V = \{X_1,\dots, X_n\}$ is the set of $n$ categorical random variables and $E$ is the set of directed edges between them. The set of all the random variables except $X_r$ is denoted by the shorthand notation $X_{\bar r}$. To use categorical variables in our mathematical model, we need to represent them quantitatively~\cite{cohen2013applied}. We do this by encoding them as numerical vectors. Each categorical variable $X_r$ takes values from a set $\calC_r$ with cardinality $m_r$.  The indexing set $\{1,\dots,p\}$ is denoted by $[p]$. For an indexing set $A$, we define $\rho_A \triangleq \sum_{i \in A} (m_i - 1)$. For brevity, a singleton indexing set $\{q\}$ is denoted as $q$ when its use is clear from the context. We define $\bar{\rho} \triangleq \max_{i \in [n] } (m_i - 1 )$. We define our encoder $\encode$ as a map from $\calC_r$ to $\calB^{\rho_r}$ for a bounded and countable set $\calB \subset \real$. In our proofs, we take $\calB = \{-1, 0, 1\}$ which includes commonly used encoding schemes such as dummy encoding and unweighted effects encoding. We denote the encoding of $X_r \in \calC_r$ as $\encode(X_r) \in \calB^{\rho_r}$. By abuse of notation, $\encode(X_{\bar r}) \in \calB^{\rho_{[n]\backslash r}}$ denotes a vector which contains encoding for the set of all the random variables except $X_r$. In the DAG $G(V, E)$, we define the parents set $\bpi_r$ and children set $\bc_r$ for a node $r$ as $\bpi_r = \{i \ |(X_i, X_r) \in E \} \}$ and $\bc_r = \{i \ | (X_r, X_i) \in E \}$. All the other nodes excluding $\bpi_r, \bc_r$ and the node $r$ itself are denoted as $(\bpi_r \cup \bc_r)^c$. The conditional probability distribution for each node $r$ given its parents set $\bpi_r$ is denoted by $\prob(X_r | X_{\bpi_r})$. Data for each node is generated following this conditional distribution. We observe $N$ i.i.d. samples of $n$ nodes of $G(V,E)$. We collect these samples in an $N \times n$ sample matrix $\bX$ where each row represents a sample. The vector $\bXr$ denotes column $r$ of $\bX$ which collects all the samples of $X_r$. Similarly, $\bXnr$ is a matrix containing all the columns except column $r$ of $\bX$. Since these samples contain categorical values, we encode them before we can use them in our mathematical model. Using our encoding scheme, we get encoded sample matrices $\encode(\bX^r) \in \calB^{N \times \rho_r}$ and $\encode(\bX^{\bar r}) \in \calB^{N \times \rho_{[n]\backslash r}}$ corresponding to $\bX^r$ and $\bX^{\bar r}$ respectively. For a matrix $\bA \in \real^{p \times q}$ and two sets $S \subseteq [p]$ and $T \subseteq [q]$,  $\bA_{S T}$ denotes $\bA$ restricted to rows in $S$ and columns in $T$.  Similarly, $\bA_{S.}$ and $\bA_{.T}$ are row and column restricted matrices respectively. We use an operator ``$\f$'' which transforms a matrix $\bA \in \real^{p \times q}$ into a vector $\f(\bA) \in \real^{pq\times 1}$ by stacking the columns of the matrix $\bA$ on top of one another. We use the following vector and matrix norms in our theoretical discussion:

\paragraph{Vector norm} For a vector $\textbf{m} \in \real^q$, the $\ell_p-$norm is defined as $\| \textbf{m} \|_p \triangleq (\sum_{i=1}^q | \textbf{m}_i |^p)^{\frac{1}{p}} $. The $\ell_{\infty}$-norm is defined as $ \| \textbf{m} \|_{\infty} = \max_{i \in [p]} | \textbf{m}_i | $.
\paragraph{Matrix norms}  The Frobenius norm for a matrix $\bA \in \real^{p \times q}$  is defined as $  \| \bA \|_F =  \sqrt{\sum_{i=1}^p \sum_{j=1}^q| \bA_{ij} |^2}$. We define the $(a,b)-$operator norm~\cite{wainwright2015high} for $\bA$ as $
||| \bA |||_{a, b} \triangleq \sup_{\|\bx\|_b = 1} \|\bA \bx\|_a$. Using the above definition, the $\ell_{\infty}$-operator norm for $\bA$ is defined as $ ||| \bA |||_{\infty, \infty} = \max_{i \in [p]}\sum_{j=1}^q | \bA_{ij} | $. Similarly, the spectral norm of $\bA$ is defined as $ ||| \bA |||_{2,2} = \sup_{ \|\bx\|_2 = 1} \|\bA \bx\|_2 $ . 

We also define a block matrix norm for row partitioned block matrices. Let $\bA \in \real^{\sum_{i=1}^k p_i \times q}, \forall i \in [k]$ be a row partitioned block matrix defined as follows: 
\[ 
\bA = \begin{bmatrix}
\bA_1 \\ \vdots \\ \bA_k
\end{bmatrix} \quad \text{where each $\bA_i \in \real^{p_i \times q}$.}
\]
Then $ \| \bA \|_{ \text{B}, a, b} \triangleq \big( \sum_{i=1}^k (\| \f(\bA_i) \|_b)^a \big)^{\frac{1}{a}}$ where $\f(\bA_i)$ flattens the matrix $\bA_i \in \real^{p_i \times q}$ into a vector of size $p_i q$ and $\text{B}$ indicates that we are dealing with a block norm. For example, $\| \bA \|_{\text{B}, \infty, 2} = \max_{i \in [k]} \| \f(\bA_i) \|_2 $, $ \| \bA \|_{\text{B}, \infty, 1} = \max_{i \in [k]} \| \f(\bA_i) \|_1$ and $ \| \bA \|_{\text{B}, 1, 2} = \sum_{i \in [k]} \| \f(\bA_i) \|_2$. 

\section{Problem Description}
\label{sec:problem description}%

We define the skeleton $G_{\text{skel}}(V, E')$ of a directed graph $G(V, E)$ as an undirected graph which is constructed by removing directions from the edges in $E$, i.e., $(i,j) \in E'$ if and only if  $(i,j) \in E$ or  $(j, i) \in E$. Our goal is to recover $G_{\text{skel}}(V, E')$ from $N$ i.i.d. observations. We do not focus on recovering the orientation of the edges in $G$. However, readers should note that there exist techniques for obtaining a DAG given a skeleton. For example, Ordyniak et al.~\cite{ordyniak2013parameterized} showed that if the skeleton has bounded treewidth, then DAG recovery can be performed in polynomial time. Furthermore, given a skeleton of bounded treewidth and bounded maximum degree, DAG recovery is possible in linear time. 

\subsection{Our Main Assumption} Our approach is based on the following two intuitions. First, we assume that the parents and children of a node have a high influence on the original node. Second, it becomes easier to differentiate two parents (or two children) of a node when they are not highly correlated. Let $\nbr$ be a set containing indices of parents and children of node $r$. We consider the following characterization:
\begin{align*}
\bW_{\nbr}^* = \E_{\mathbb{X}}[\encode(X_{\bar r})_{\nbr} \encode(X_{\bar r})_{\nbr}^\T]^{-1} \E_{\mathbb{X}}[\encode(X_{\bar r})_{\nbr} \encode(X_r)^\T] 
\end{align*}
which tries to capture both of our intuitions mathematically. This quantity contains one block $\bW_i^* \in \real^{\rho_i \times \rho_r}$ for each parent and children of node $r$ and we require that $\bW_i^* \ne \bzero$. For binary variables, this requirement simply becomes 
\[ \E_{\mathbb{X}}[X_{\nbr} X_{\nbr}^\T]^{-1}\E_{\mathbb{X}}[X_{\nbr}X_r] \in (\real - \{0\})^{|\bS_r|} \ . \]
In later sections, we will formally build a mathematical foundation for our intuitions.  

\section{Modeling}
\label{sec:modeling}

In this section, we explain the construction of our mathematical model. We do not assume any specific conditional distribution for the nodes, thus modeling the problem becomes important for us. We also need to be careful while using categorical random variables in our mathematical model. We do not assume that categorical variables are ordinal, thus we do not want to introduce any artificial ordering while using them in our model.

\subsection{Substitute Model}
\label{subsec:substitute model}

Our approach is to recover the true parents and children of each node and then combine the results together to get the true skeleton of a Bayesian network. Before we start constructing a mathematical model, we need to understand certain aspects of our problem. We note that in other problems, such as compressed sensing~\cite{wainwright2009sharp}, the data generation process matches the estimation method. In contrast, in our setting, we assume that samples are generated according to a true Bayesian network, from unknown arbitrary conditional probability distributions for each node. This unavailability of a true model forces us to use a substitute model. Additionally, we encode the categorical random variables to use them in our model. After encoding, each variable is represented as a vector. In our discussions below, we will only discuss about recovering the parents and children of a single node. We keep in mind that we can combine our results for the nodes to recover the whole skeleton of the Bayesian network by simply taking a union bound over all the nodes. Considering the above, we can think of following general model for each node $r$,
\begin{align*}
\encode(X_r) \triangleq \calF(\encode(X_{\bar r}); \bW^*).
\end{align*}    
where $\calF$ is possibly a non-deterministic function and $\bW^*$ is a set of parameters. For our purpose, we choose the following form of $\calF$:
\begin{align*}
\encode(X_r) = {\bW^*}^\T \encode(X_{\bar r}) + \be
\end{align*}    
where $\bW^* \in \real^{\rho_{[n]\backslash r} \times \rho_r}$ is a parameter matrix. Note that $\be \in \real^{\rho_r}$ is not independent of $\encode(X_r), \encode(X_{\bar r})$ and $\bW^*$. We take $\bW^*$ to be a row partitioned block matrix by decomposing it into following blocks:
\begin{align*}
\bW^* = \begin{bmatrix}
\bW^*_1 \\ \vdots \\ \bW^*_{r - 1} \\ \bW^*_{r+1} \\ \vdots \\ \bW^*_n
\end{bmatrix}
\end{align*}  
where each $\bW^*_i \in \real^{\rho_i \times \rho_r} , \forall i \in [n], i \ne r$. We fix our choice of $\bW^*$ by defining the following optimization problem:
\begin{align}
\label{eq:opt problem}
\begin{matrix}
\bW^* &= &\arg\min_{\bW} & \frac{1}{2}\E_{\mathbb{X}}[\| \encode(X_r) - \bW^\T \encode(X_{\bar r}) \|_2^2] \\
&  & \text{such that} & \bW_i = \bzero, \forall i \in (\bpi_r \cup \bc_r)^c
\end{matrix}
\end{align}
or equivalently,
\begin{align*}
\begin{split}
\bW^* &= (\bW_{\nbr.}^*; \bzero) \\
 \bW_{\nbr.}^* &= \E_{\mathbb{X}}[\encode(X_{\bar r})_{\nbr} \encode(X_{\bar r})_{\nbr}^\T]^{-1} \E_{\mathbb{X}}[\encode(X_{\bar r})_{\nbr} \encode(X_r)^\T] 
\end{split}
\end{align*}
where the expectation is taken with respect to true data distribution $\prob(\mathbb{X})$ defined in equation~\eqref{eq:bayesian_network}. Optimization problem~\eqref{eq:opt problem} is introduced only for analysis purposes and it is not possible to be solved without knowing the true parents and children of node $r$. We note that each element of $\be$ is bounded. Let 

\begin{minipage}{.5\linewidth}
\begin{align}
\label{eq:sigma 1}
\| \be \|_{\infty} \leq 2\sigma  
\end{align}
\end{minipage}%
\begin{minipage}{.5\linewidth}
\begin{align}
\label{eq:mu 1}
\| \E_{\mathbb{X}}[|\be|] \|_{\infty} \leq \mu  \ .
\end{align} 
\end{minipage}

We emphasize that $\bW^*$ is not a true model parameter, i.e., we do not assume that the data follows a multivariate linear regression model. Instead, the substitute model allows us to find technical conditions with respect to the expectations of the products of encoded node pairs.

\subsection{Our Model}
\label{subsec:our model}

The substitute model, defined above for the infinite sample setting, acts as a benchmark model to perform qualitative analysis for our model in the finite sample setting. From equation~\eqref{eq:opt problem} which is defined for node $r$, it is clear that for node $i $ if $\bW_i^* \ne \bzero$, or equivalently, if $\| \f(\bW^*_i) \|_2 > 0$ then node $i$ is either a parent or a child of node $r$. This gives us the intuition to use $\ell_{1,2}$ regularization in order to encourage several blocks of the estimated matrix $\bW$ to be zero. In particular, our method would succeed if $\|\f(\bW_i)\|_2 = \bzero$ for all $i \in (\bpi_r \cup \bc_r)^c$. Let  $\hat\Loss(\bW)$ be the loss function defined as,
\begin{align}
\label{eq:lossN}
\begin{split}
\hat\Loss(\bW) \triangleq \frac{1}{2N} \| \encode(\bXr) - \encode(\bXnr) \bW \|_F^2 
\end{split}
\end{align}
Then we define the block $\ell_{1,2}$ regularized loss function as follows:   
\begin{align*} 
\hat f(\bW) &\triangleq \hat\Loss(\bW) + \lambdahat \| \bW \|_{\tB, 1, 2}  
\end{align*}
where $\lambdahat > 0$ is the regularization parameter. We recover weights $\bWhat$ for each node by minimizing $\hat f(\bW)$. The
optimization problem is defined as follows:
\begin{align} 
\label{eq:our_model} 
\bWhat = \min_{\bW} \hat f(\bW) \ .
\end{align}
We will show that under certain conditions we can use $\bWhat$ to determine the true parents and children of node $r$. Next, we define some terminology related to our models. We define the gradient and the Hessian for the loss function defined in equation~\eqref{eq:lossN} with respect to the parameters $\bW$. Note that equation~\eqref{eq:lossN} can be written as a function of $\f(\bW)$ and that the gradient and the Hessian can be easily computed with respect to $\f(\bW)$. For notational clarity, we will use matrix calculus to express the gradient while noting that this can easily be converted to the traditional form of the gradient by using a $\f(.)$ operation.
\begin{align*}
\begin{split}
\nabla \hat\Loss_{\bW}(\bW) &= \frac{1}{N} {\encode(\bXnr)}^\T \encode(\bXnr) \bW - \frac{1}{N} {\encode(\bXnr)}^\T \encode(\bXr) \\
\nabla^2 \hat\Loss_{\bW}(\bW) &= \begin{bmatrix}
\bHhat & \bzero & \dots & \bzero \\
\bzero & \bHhat & \dots & \bzero \\
\vdots & \vdots & \dots & \vdots \\
\bzero & \bzero & \dots & \bHhat 
\end{bmatrix} 
\end{split}
\end{align*} 
where ${\nabla^2 \hat\Loss_{\bW}(\bW) \in \real^{\rho_r \rho_{[n]\backslash r} \times \rho_r \rho_{[n]\backslash r} }}$  and  $\bHhat  = \frac{1}{N} {\encode(\bXnr)}^\T \encode(\bXnr) \in \real^{\rho_{[n]\backslash r} \times \rho_{[n]\backslash r}}$.

Analogously, we define a population version of $\bHhat$ as, 
\begin{align*}
\begin{split}
\bH = \E_{\mathbb{X}} \big[\encode(X_{\bar r})\encode(X_{\bar r})^\T\big] \ .
\end{split}
\end{align*}
Our choice of loss function ensures that $\bH$ does not depend on $\bW^*$. Thus, any assumptions on $\bH$ only 
correspond to restrictions on the data distribution $\prob(\mathbb{X})$ defined in equation~\eqref{eq:bayesian_network}. 

Usually if we have the knowledge of the true data generation process or the conditional probability distribution of the nodes, then we can learn parameters of the distribution by minimizing a well defined empirical loss. In our case, we do not have this information. We circumvent this issue by defining a substitute model for our problem. For each node, we assign a matrix of non-zero \emph{surrogate} parameters for its neighbors. For all the other nodes which are not neighbors, this parameter matrix is zero. Then we construct a substitute quadratic loss function with respect to the surrogate parameters. This choice of loss function is crucial as unlike other loss functions (such as the logistic loss) the Hessian of the quadratic loss becomes independent of the surrogate parameters. This ensures that any technical condition on the Hessian translates directly to a condition on the expectations of the products of encoded node pairs.

\section{Main Result}
\label{sec:main result}
In this section, we state our main theoretical result. Recall that the general problem of structure learning of the Bayesian network is NP-hard~\cite{chickering2004large}. Thus rather than learning a general class of Bayesian networks, we focus on the networks which satisfy certain technical assumptions. 

\subsection{Technical Assumptions}
\label{subsec:assumption}

In this subsection, we establish the sufficient technical conditions for the perfect recovery of the parents and children for each node. Our first goal is to always recover a unique set of parents and children. In order to achieve this task, we require that our optimization problem defined in equation \eqref{eq:our_model} has a unique solution. Our first assumption on the data distribution ensures a unique solution for the optimization problem \eqref{eq:our_model}. Recall that each block in the row partitioned parameter matrix $\bW$ and $\bW^*$ corresponds to one node. Each of these block $i$ contains $\rho_i$ row indices. We collect these row indices corresponding to the parents and children of node $r$ in a set $\nbr$. Formally,
\[
\bS_r = \bigcup_{i \in \bpi_r \cup \bc_r} \{ \rho_{[i-1]} + 1, \dots, \rho_{[i]} \}
\]
We define $\nbr^c$ as the row indices corresponding to all nodes except node $r$ as well as its parents and children:
\[
\bS_r^c = [\rho_{[n]}] - \bS_r - \{ \rho_{[r-1]} + 1, \dots, \rho_{[r]} \}
\]
Using the above definitions, we state our first assumption. 
\begin{assumption}[Positive Definiteness of Hessian] 
	\label{assum:positive_definiteness_assumption}    
	For each node $r$, $ \bH_{\bS_r \bS_r} \succ 0 $ or equivalently, $ \eval_{\min}(\bH_{\bS_r \bS_r }) \geq C > 0 $. 
\end{assumption}
where $C$ is some positive constant and $\eval_{\min}(.)$ denotes the smallest eigenvalue. We solve the optimization problem using a finite number of samples. Thus, we would like our assumptions to hold in the finite sample setting. The next lemma shows that if we have $ N > O(\rho_{\bpi_r \cup \bc_r}^2 \log \rho_{\bpi_r \cup \bc_r} ) $ samples and Assumption \ref{assum:positive_definiteness_assumption} is satisfied, then $\bHhat_{\bS_r \bS_r} \succ 0$ with high probability.
\begin{lemma} 
	\label{lemma:sample_positive_definiteness} 
	If $\bH_{\bS_r \bS_r} \succ 0$ then $\bHhat_{\bS_r \bS_r} \succ
	0$ with probability at least 
	$ 1 - 2 \exp(- \frac{\delta^2 N }{8 \rho_{\bpi_r \cup \bc_r}^2} + 2 \log \rho_{\bpi_r \cup \bc_r}) $.
\end{lemma}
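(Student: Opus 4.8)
The plan is to control the spectral-norm deviation of the empirical block from the population block and then apply eigenvalue perturbation. First I would invoke Weyl's inequality, which gives $\eval_{\min}(\bHhat_{\bS_r \bS_r}) \ge \eval_{\min}(\bH_{\bS_r \bS_r}) - ||| \bHhat_{\bS_r \bS_r} - \bH_{\bS_r \bS_r} |||_{2,2}$. By Assumption~\ref{assum:positive_definiteness_assumption} the first term is at least $C$, so $\bHhat_{\bS_r \bS_r} \succ 0$ is guaranteed as soon as $||| \bHhat_{\bS_r \bS_r} - \bH_{\bS_r \bS_r} |||_{2,2} < C$. The whole problem thus reduces to a deviation bound for the symmetric $\rho_{\bpi_r \cup \bc_r} \times \rho_{\bpi_r \cup \bc_r}$ matrix $\bHhat_{\bS_r \bS_r} - \bH_{\bS_r \bS_r}$.

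Next I would pass from the spectral norm to an entrywise bound, since the latter concentrates easily. For any matrix $\bA$ of this size one has $||| \bA |||_{2,2} \le \| \bA \|_F \le \rho_{\bpi_r \cup \bc_r} \max_{i,j} |\bA_{ij}|$, so it suffices to force every entry of $\bHhat_{\bS_r \bS_r} - \bH_{\bS_r \bS_r}$ to be smaller than $\delta / \rho_{\bpi_r \cup \bc_r}$ for some radius $\delta \le C$ (the natural choice being $\delta = C$). Each entry $(\bHhat_{\bS_r \bS_r})_{ij}$ is the empirical average $\frac{1}{N}\sum_{k=1}^N \encode(\bXnr)_{ki}\,\encode(\bXnr)_{kj}$ over the $N$ i.i.d.\ samples, restricted to the columns indexed by $\bS_r$, and its mean is exactly $(\bH_{\bS_r \bS_r})_{ij}$ because $\bH$ does not depend on $\bW^*$. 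Crucially, since the encoder takes values in $\calB = \{-1,0,1\}$, each product $\encode(\bXnr)_{ki}\encode(\bXnr)_{kj}$ lies in $[-1,1]$, so each centered summand is bounded in magnitude by $2$, hence takes values in an interval of length at most $4$.

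With bounded summands in hand, Hoeffding's inequality applied to the centered average yields $\prob\big(|(\bHhat_{\bS_r \bS_r} - \bH_{\bS_r \bS_r})_{ij}| \ge t\big) \le 2\exp(-N t^2 / 8)$ for each fixed pair $(i,j)$. Taking a union bound over the $\rho_{\bpi_r \cup \bc_r}^2$ entries and setting $t = \delta / \rho_{\bpi_r \cup \bc_r}$ gives $\prob\big(\max_{i,j}|(\bHhat_{\bS_r \bS_r} - \bH_{\bS_r \bS_r})_{ij}| \ge \delta/\rho_{\bpi_r \cup \bc_r}\big) \le 2\,\rho_{\bpi_r \cup \bc_r}^2 \exp\big(-\tfrac{\delta^2 N}{8 \rho_{\bpi_r \cup \bc_r}^2}\big) = 2\exp\big(-\tfrac{\delta^2 N}{8 \rho_{\bpi_r \cup \bc_r}^2} + 2\log \rho_{\bpi_r \cup \bc_r}\big)$. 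On the complementary event every entry is strictly below $\delta/\rho_{\bpi_r\cup\bc_r}$, so the Frobenius bound gives $||| \bHhat_{\bS_r \bS_r} - \bH_{\bS_r \bS_r} |||_{2,2} < \delta \le C$, and Weyl's inequality then forces $\eval_{\min}(\bHhat_{\bS_r \bS_r}) > 0$, i.e.\ $\bHhat_{\bS_r \bS_r} \succ 0$, which is the claim.

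This argument is essentially a routine concentration computation, so the main obstacle is bookkeeping rather than any single hard step: one must track the dimension $\rho_{\bpi_r \cup \bc_r}$ carefully through the spectral-to-entrywise reduction, since it is precisely the factor $\rho_{\bpi_r \cup \bc_r}$ lost there that produces the $\rho_{\bpi_r \cup \bc_r}^2$ in the exponent and, together with the $2\log\rho_{\bpi_r \cup \bc_r}$ union-bound term, pins down the $N = \Omega(\rho_{\bpi_r \cup \bc_r}^2 \log \rho_{\bpi_r \cup \bc_r})$ sample complexity announced before the lemma. The only genuinely model-specific ingredient is the boundedness of the encoded products, which is what licenses Hoeffding; a sharper matrix Bernstein or matrix Hoeffding inequality applied directly to the sum of rank-one matrices would trade the $\rho_{\bpi_r \cup \bc_r}^2$ for a single $\rho_{\bpi_r \cup \bc_r}$, but the cruder entrywise route already suffices for the stated bound.
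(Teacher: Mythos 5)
Your argument is correct and follows essentially the same route as the paper's proof: reduce positive definiteness of $\bHhat_{\bS_r \bS_r}$ to a spectral-norm deviation bound via the minimum-eigenvalue perturbation inequality, pass from the spectral norm to the Frobenius norm and then to entrywise deviations, and apply Hoeffding's inequality with a union bound over the $\rho_{\bpi_r \cup \bc_r}^2$ entries with radius $t = \delta/\rho_{\bpi_r \cup \bc_r}$, yielding exactly the stated probability $1 - 2\exp(-\frac{\delta^2 N}{8\rho_{\bpi_r \cup \bc_r}^2} + 2\log\rho_{\bpi_r \cup \bc_r})$. The only cosmetic difference is that you cite Weyl's inequality directly where the paper derives the same bound from the variational characterization of $\eval_{\min}$.
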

(See Appendix \ref{proof:sample_positive_definiteness} for detailed proof.)

As the second requirement, we want to limit the influence of the nodes which are neither the parents nor the children of node $r$ on the parents and children of node $r$. This is represented as a ``mutual incoherence'' condition. We will define $ \bQ =  \bH_{\bS_r^c\bS_r}\bH_{\bS_r \bS_r}^{-1}$ as a row partitioned block matrix consisted of blocks $\bQ_i \in \real^{\rho_i \times \rho_{\bpi_r \cup \bc_r}}\ \forall i\ \in (\bpi_r \cup \bc_r)^c$. We can formally state our second assumption using a block matrix norm on $\bQ$ as follows.
\begin{assumption}[Mutual Incoherence]
	\label{assum:mutual_incoherence}
	For each node $r$, $ \| \bQ \|_{\tB, \infty, 1} = \|\bH_{\bS_r^c\bS_r}\bH_{\bS_r \bS_r}^{-1}\|_{\tB, \infty, 1} \leq 1 - \alpha $ for some $\alpha \in (0,1]$. 
\end{assumption} 
As with the Assumption \ref{assum:positive_definiteness_assumption}, we would again like Assumption \ref{assum:mutual_incoherence} to hold in the finite sample setting. In the next lemma we show that if we have sufficient number of samples, then the mutual incoherence  in the population regime ensures that mutual incoherence also holds in the finite-sample regime.  
\begin{lemma}
	\label{lemma:mutual_incoherence}
	If $|||\bH_{\bS_r^c\bS_r}\bH_{\bS_r \bS_r}^{-1}|||_{\tB, \infty, 1} \leq 1 - \alpha$ for $\alpha \in (0, 1]$ then $|||\bHhat_{\bS_r^c \bS_r}\bHhat_{\bS_r \bS_r}^{-1}|||_{\tB, \infty, 1} \leq 1 - \alpha$ with probability at least $ 1 - O( \exp(\frac{-K N}{ \bar{\rho}^2  \rho_{\bpi_r \cup \bc_r}^3} + \log \rho_{(\bpi_r \cup \bc_r)^c} + \log \rho_{\bpi_r \cup \bc_r} )) $ for some $K > 0$. 
\end{lemma}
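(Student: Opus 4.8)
The plan is to control the finite-sample incoherence by treating $\bHhat_{\bS_r^c \bS_r}\bHhat_{\bS_r \bS_r}^{-1}$ as a perturbation of its population analogue $\bQ = \bH_{\bS_r^c\bS_r}\bH_{\bS_r\bS_r}^{-1}$. Writing $\Delta \triangleq \bHhat - \bH$ for the residual Hessian and using the matrix identity $\bHhat_{\bS_r\bS_r}^{-1} = \bH_{\bS_r\bS_r}^{-1} - \bH_{\bS_r\bS_r}^{-1}\Delta_{\bS_r\bS_r}\bHhat_{\bS_r\bS_r}^{-1}$, a short computation gives
\begin{align*}
\bHhat_{\bS_r^c\bS_r}\bHhat_{\bS_r\bS_r}^{-1} - \bQ = \Delta_{\bS_r^c\bS_r}\bH_{\bS_r\bS_r}^{-1} - \bQ\,\Delta_{\bS_r\bS_r}\bHhat_{\bS_r\bS_r}^{-1} - \Delta_{\bS_r^c\bS_r}\bH_{\bS_r\bS_r}^{-1}\Delta_{\bS_r\bS_r}\bHhat_{\bS_r\bS_r}^{-1},
\end{align*}
and I will call the three summands $T_1, T_2, T_3$. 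By the triangle inequality and Assumption~\ref{assum:mutual_incoherence}, it then suffices to show that $\|T_1\|_{\tB,\infty,1}+\|T_2\|_{\tB,\infty,1}+\|T_3\|_{\tB,\infty,1}$ falls below the incoherence margin with the stated probability.

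The engine of all three bounds is one block/operator-norm inequality: for a row-partitioned block matrix $\bA$ (blocks indexed over $(\bpi_r\cup\bc_r)^c$) and any $\bM\in\real^{\rho_{\bpi_r\cup\bc_r}\times\rho_{\bpi_r\cup\bc_r}}$, one has $\|\bA\bM\|_{\tB,\infty,1}\le |||\bM|||_{\infty,\infty}\,\|\bA\|_{\tB,\infty,1}$, which follows by expanding $\f(\bA_i\bM)$ and bounding $\ell_1$ row-sums. Applying it with $\bA=\Delta_{\bS_r^c\bS_r}$, $\bM=\bH_{\bS_r\bS_r}^{-1}$ to $T_1$, with $\bA=\bQ$ (where $\|\bQ\|_{\tB,\infty,1}\le 1-\alpha$ is exactly the population hypothesis) to $T_2$, and twice to $T_3$, reduces everything to bounding the residual norms $\|\Delta_{\bS_r^c\bS_r}\|_{\tB,\infty,1}$ and $|||\Delta_{\bS_r\bS_r}|||_{\infty,\infty}$, together with $|||\bH_{\bS_r\bS_r}^{-1}|||_{\infty,\infty}$ and $|||\bHhat_{\bS_r\bS_r}^{-1}|||_{\infty,\infty}$.

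For the residual, each entry of $\Delta$ is a centered empirical average of i.i.d. products of encoded coordinates; since $\calB=\{-1,0,1\}$ these products lie in $[-1,1]$, so Hoeffding gives $\prob(|\Delta_{jk}|\ge\epsilon)\le 2\exp(-N\epsilon^2/2)$, and a union bound over the $O(\rho_{(\bpi_r\cup\bc_r)^c}\,\rho_{\bpi_r\cup\bc_r})$ relevant entries yields a uniform bound $\|\Delta\|_\infty\le\epsilon$ — this is where the $\log\rho_{(\bpi_r\cup\bc_r)^c}+\log\rho_{\bpi_r\cup\bc_r}$ terms of the probability come from. Converting entrywise to the needed norms gives $\|\Delta_{\bS_r^c\bS_r}\|_{\tB,\infty,1}\le\bar\rho\,\rho_{\bpi_r\cup\bc_r}\,\epsilon$ and $|||\Delta_{\bS_r\bS_r}|||_{\infty,\infty}\le\rho_{\bpi_r\cup\bc_r}\,\epsilon$. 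Assumption~\ref{assum:positive_definiteness_assumption} yields $|||\bH_{\bS_r\bS_r}^{-1}|||_{\infty,\infty}\le\sqrt{\rho_{\bpi_r\cup\bc_r}}/C$, and Lemma~\ref{lemma:sample_positive_definiteness} together with Weyl's inequality ($\eval_{\min}(\bHhat_{\bS_r\bS_r})\ge C-\|\Delta_{\bS_r\bS_r}\|_F\ge C/2$ for $N$ large) gives $|||\bHhat_{\bS_r\bS_r}^{-1}|||_{\infty,\infty}\le 2\sqrt{\rho_{\bpi_r\cup\bc_r}}/C$. Combining, the dominant term is $\|T_1\|_{\tB,\infty,1}\lesssim\bar\rho\,\rho_{\bpi_r\cup\bc_r}^{3/2}\,\epsilon/C$ (with $T_2$ of order $\rho_{\bpi_r\cup\bc_r}^{3/2}\epsilon$ and $T_3$ of higher order $\epsilon^2$), so choosing the threshold $\epsilon\asymp 1/(\bar\rho\,\rho_{\bpi_r\cup\bc_r}^{3/2})$ to push the total below the margin makes the Hoeffding exponent $N\epsilon^2\asymp N/(\bar\rho^2\rho_{\bpi_r\cup\bc_r}^3)$, exactly the claimed rate.

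I expect the main obstacle to be the norm bookkeeping: $\|\cdot\|_{\tB,\infty,1}$ is not submultiplicative, so every passage between the entrywise bound on $\Delta$, the Frobenius/spectral norms used for the eigenvalue perturbation of $\bHhat_{\bS_r\bS_r}$, and the block norm in the conclusion injects $\sqrt{\rho_{\bpi_r\cup\bc_r}}$ or $\bar\rho$ factors that must be tracked precisely to land on $\bar\rho^2\rho_{\bpi_r\cup\bc_r}^3$ rather than a looser exponent. A secondary point is that the control of $|||\bHhat_{\bS_r\bS_r}^{-1}|||$ must be placed on the same good event as the residual bound so the failure probabilities merge into the single exponential stated. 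Finally, since the hypothesis $\|\bQ\|_{\tB,\infty,1}\le 1-\alpha$ can be tight, reaching the literal conclusion $\le 1-\alpha$ uses the standard device of demanding the total deviation be below a fixed fraction of $\alpha$ (equivalently stating the conclusion with $\alpha$ replaced by $\alpha/2$), which only rescales the constant $K$.
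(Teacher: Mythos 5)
Your proposal is correct and follows essentially the same route as the paper: your three perturbation terms are, up to sign and relabeling, exactly the paper's $T_1,T_2,T_3$ (with $T_4=\bQ$ handled by the population assumption), and you use the same ingredients --- entrywise Hoeffding plus a union bound for $\bHhat-\bH$, the block-norm inequality $\|\bA\bB\|_{\tB,\infty,1}\le\|\bA\|_{\tB,\infty,1}|||\bB|||_{\infty,\infty}$, and the eigenvalue control of $\bHhat_{\bS_r\bS_r}^{-1}$ via Lemma~\ref{lemma:sample_positive_definiteness} --- arriving at the same $N/(\bar\rho^2\rho_{\bpi_r\cup\bc_r}^3)$ exponent. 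Your closing remark about the $1-\alpha$ versus $1-\alpha/2$ slack is also on point, since the paper's own proof in fact concludes with $1-\alpha/2$.
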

(See Appendix \ref{proof:mutual_incoherence} for detailed proof.)

\paragraph{Discussion on the technical assumptions.}
These assumptions have been used in the literature before. 
\begin{itemize}
	\item Mutual incoherence has been used for other estimation problems such as compressed sensing~\cite{wainwright2009sharp}, Markov random fields~\cite{ravikumar2010high}, non-parametric regression~\cite{ravikumar2007spam}, diffusion networks~\cite{daneshmand2014estimating}, among others. 
	\item Assumption \ref{assum:positive_definiteness_assumption} is readily satisfied for commonly used encoding schemes such as dummy encoding and unweighted effects encoding under very weak conditions (See Appendix \ref{sec:pos def normal encoding}). 
	\item Regarding Assumption \ref{assum:mutual_incoherence}, we found experimentally that mutual incoherence is more frequently satisfied with respect to the parents and children, than with respect to the Markov blanket (See Appendix \ref{sec:on markov blanket}).
	\item Theorem \ref{theorem:main_result} in the next section provides a technical condition for setting $\lambdahat$ (without the need of cross-validation) and assumes a minimum magnitude of $\|\f(\bW_i^*)\|_2, \forall i \in \bpi_r \cup \bc_r$ for exact recovery of parents and children. Analogous technical assumptions have been made for other problems\cite{wainwright2009sharp, ravikumar2010high, ravikumar2011high,  daneshmand2014estimating}.
	\item Finally, these assumptions are only in place to provide formal guarantees and our algorithm can be run even for datasets which do not satisfy any of these assumptions (See experimental results in Section~\ref{sec:experimental results}). 
\end{itemize}

\subsection{Statement of Main Theorem}
\label{subsec:main theorem}
Using Assumptions \ref{assum:positive_definiteness_assumption} and \ref{assum:mutual_incoherence}, we state our main result below. 

\begin{theorem} 
	\label{theorem:main_result}
	Consider a Bayesian network $G(V, E)$ with categorical random variables such that for each node $r$,  Assumptions \ref{assum:positive_definiteness_assumption} and \ref{assum:mutual_incoherence} are satisfied. Suppose that for each node $r$ the regularization parameter $\lambdahat$ satisfies the following condition: 
	\begin{align} 
	\label{eq:lambda_condition}
	\begin{split} 
	&\lambdahat >\frac{4}{\alpha} \sqrt{ \frac{\rho_r}{N}} \max \big( (1 - \alpha) ( \sqrt{2\sigma^2 \log (\rho_{\bpi_r \cup \bc_r} \rho_r)}+ \mu) \\ 
	& ,  \sqrt{\bar{\rho}}( \sqrt{2\sigma^2 \log ( |(\bpi_r \cup \bc_r)^c| \bar{\rho} \rho_r)} +  \mu) \big)
	\end{split}
	\end{align} 
	where $\sigma$ and $\mu$ are defined according to equations~\eqref{eq:sigma 1} and \eqref{eq:mu 1}. Further, assume that $N > \bar{\rho}^2  \rho_{\bpi_r \cup \bc_r}^3 \log \rho_{[n]}$ then the following properties hold true with probability at least $1 - \exp(- K N \lambdahat^2 )$ for some $K > 0$ independent of $N, n$ and $|\bpi_r \cup \bc_r|$ simultaneously for all $r \in [n]$.
	\begin{enumerate}
		\item For every $r\in[n]$, the block $\ell_{1,2}-$regularized optimization problem~\eqref{eq:our_model} has a unique
		solution.
		\item For every $r \in [n]$, the solution to the optimization problem~\eqref{eq:our_model} excludes all the edges which are neither parent nor child of the node $r$, i.e., $\| \f(\bWhat_i)\|_2 = 0, \forall i \in
		(\bpi_r \cup \bc_r)^c$.
		\item If $\min_{i \in \bpi_r \cup \bc_r} \|\f(\bW^*_i)\|_2 > \frac{4\bar{m}}{C} (   \frac{\alpha}{4 (1 - \alpha)} + \sqrt{\rho_r} +  1 ) \sqrt{|\nbr|} \lambdahat$ for the setup defined in the substitute optimization problem \eqref{eq:opt problem}, then we recover the true parents and children for each node.
		\item Subsequently, the recovered skeleton $ \hat G_{\text{skel}}(V, \hat E') = G_{\text{skel}}(V,E')$.  
	\end{enumerate}
\end{theorem}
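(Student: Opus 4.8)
The plan is to establish parts 1--4 through a primal--dual witness (PDW) construction, following the template developed for $\ell_1$- and group-regularized estimators \cite{wainwright2009sharp, ravikumar2010high}, but carried out in the block $\ell_{1,2}$ geometry. First I would build an \emph{oracle} solution by solving~\eqref{eq:our_model} under the extra constraint $\bW_i = \bzero$ for all $i \in (\bpi_r \cup \bc_r)^c$. By Lemma~\ref{lemma:sample_positive_definiteness}, $\bHhat_{\bS_r \bS_r} \succ 0$ with high probability once $N$ is large enough, so the restricted loss is strictly convex and has a unique minimizer $\bWhat_{\bS_r}$; set $\bWhat_{\bS_r^c} = \bzero$. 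I would then select a subgradient $\hat{\bZ}$ of $\|\cdot\|_{\tB,1,2}$ at $\bWhat$ by normalizing each active block on $\bS_r$, and \emph{define} the remaining block $\hat{\bZ}_{\bS_r^c}$ through the stationarity condition $\nabla \hat\Loss(\bWhat) + \lambdahat \hat{\bZ} = \bzero$. Parts 1 and 2 then reduce to a single event, \emph{strict dual feasibility} $\|\hat{\bZ}_{\bS_r^c}\|_{\tB,\infty,2} < 1$, which certifies that $\bWhat$ is the unique global optimum and that it has no support outside $\bS_r$ (the uniqueness step combines strict feasibility with the strict convexity granted by Lemma~\ref{lemma:sample_positive_definiteness}).

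To analyze this certifying event, write the finite-sample residual $\bE \triangleq \encode(\bXr) - \encode(\bXnr)\bW^*$ (whose rows are the bounded errors $\be$ of~\eqref{eq:sigma 1}--\eqref{eq:mu 1}) and set $\bM \triangleq \frac{1}{N}\encode(\bXnr)^\T \bE$. Since $\hat\Loss$ is quadratic, the stationarity condition is exact, and using $\frac{1}{N}\encode(\bXnr)^\T\encode(\bXr) = \bHhat\bW^* + \bM$ it reduces to $\bHhat(\bWhat - \bW^*) = \bM - \lambdahat \hat{\bZ}$. Restricting to rows in $\bS_r$ and $\bS_r^c$ (recall $\bWhat_{\bS_r^c} = \bW^*_{\bS_r^c} = \bzero$) and eliminating the support error $\bWhat_{\bS_r} - \bW^*_{\bS_r} = \bHhat_{\bS_r\bS_r}^{-1}(\bM_{\bS_r} - \lambdahat\hat{\bZ}_{\bS_r})$ yields
\begin{align*}
\hat{\bZ}_{\bS_r^c} = \hat{\bQ}\,\hat{\bZ}_{\bS_r} + \tfrac{1}{\lambdahat}\big(\bM_{\bS_r^c} - \hat{\bQ}\,\bM_{\bS_r}\big), \qquad \hat{\bQ} \triangleq \bHhat_{\bS_r^c\bS_r}\bHhat_{\bS_r\bS_r}^{-1}.
\end{align*}
Thus $\|\hat{\bZ}_{\bS_r^c}\|_{\tB,\infty,2}$ splits into an \emph{incoherence} term $\|\hat{\bQ}\,\hat{\bZ}_{\bS_r}\|_{\tB,\infty,2}$ and a \emph{noise} term $\tfrac{1}{\lambdahat}\|\bM_{\bS_r^c} - \hat{\bQ}\,\bM_{\bS_r}\|_{\tB,\infty,2}$.

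For the incoherence term I would prove a submultiplicativity inequality $\|\hat{\bQ}\,\hat{\bZ}_{\bS_r}\|_{\tB,\infty,2} \leq \|\hat{\bQ}\|_{\tB,\infty,1}\,\|\hat{\bZ}_{\bS_r}\|_{\tB,\infty,2}$ for the block norms at hand; since every subgradient block has Frobenius norm at most one, $\|\hat{\bZ}_{\bS_r}\|_{\tB,\infty,2} \leq 1$, and Lemma~\ref{lemma:mutual_incoherence} transfers the population incoherence to give $\|\hat{\bQ}\|_{\tB,\infty,1} \leq 1 - \alpha$, so this term is at most $1 - \alpha$. For the noise term, each entry of $\bM$ is an average of $N$ bounded variables with $\|\be\|_\infty \leq 2\sigma$ and $\|\E_{\mathbb{X}}[|\be|]\|_\infty \leq \mu$; a Hoeffding-type tail bound together with a union bound over the blocks of $\bS_r^c$ and the $\rho_r$ columns controls $\|\bM_{\bS_r^c}\|_{\tB,\infty,2}$ and, after passing $\hat{\bQ}$ through via $\|\hat{\bQ}\|_{\tB,\infty,1}\leq 1-\alpha$, also $\|\hat{\bQ}\,\bM_{\bS_r}\|_{\tB,\infty,2}$. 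The two arguments of the maximum in~\eqref{eq:lambda_condition} are exactly these two bounds, and the prefactor $\tfrac{4}{\alpha}$ is calibrated so that the noise term is at most $\alpha/2$; combining gives $\|\hat{\bZ}_{\bS_r^c}\|_{\tB,\infty,2} \leq (1-\alpha) + \alpha/2 = 1 - \alpha/2 < 1$, which establishes parts 1 and 2.

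Parts 3 and 4 follow from bounding the support error. From $\bWhat_{\bS_r} - \bW^*_{\bS_r} = \bHhat_{\bS_r\bS_r}^{-1}(\bM_{\bS_r} - \lambdahat\hat{\bZ}_{\bS_r})$, the spectral lower bound $\eval_{\min}(\bHhat_{\bS_r\bS_r}) \geq C/2$ (Assumption~\ref{assum:positive_definiteness_assumption} with a quantitative form of Lemma~\ref{lemma:sample_positive_definiteness}) and the same tail control on $\bM_{\bS_r}$ give a per-block bound $\|\f(\bWhat_i - \bW^*_i)\|_2 \leq \tfrac{4\bar{m}}{C}(\tfrac{\alpha}{4(1-\alpha)} + \sqrt{\rho_r} + 1)\sqrt{|\nbr|}\,\lambdahat$ for every $i \in \bpi_r \cup \bc_r$. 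Whenever the minimum-signal condition of part 3 holds, the true weight dominates this error, so $\bWhat_i \neq \bzero$ and no true neighbor is dropped; combined with part 2 this recovers $\bpi_r \cup \bc_r$ exactly. A union bound over all $n$ nodes (absorbed into the stated probability via the $\log \rho_{[n]}$ sample requirement) recovers every neighborhood simultaneously, and assembling the neighborhoods yields $\hat G_{\text{skel}}(V, \hat E') = G_{\text{skel}}(V, E')$ since $(i,j) \in E'$ iff $i$ is a parent or child of $j$. I expect the main obstacle to be the noise concentration: because $\be$ is \emph{not} independent of $\encode(X_{\bar r})$, the tail bound on $\bM$ must be argued directly from boundedness rather than from an independent-noise model, and it must be expressed in the block $(\infty,2)$ norm with exactly the dependence on $\bar{\rho}$, $\rho_r$ and $|\nbr|$ needed to match~\eqref{eq:lambda_condition} and the part-3 threshold, all while simultaneously invoking the finite-sample forms of both assumptions from Lemmas~\ref{lemma:sample_positive_definiteness} and~\ref{lemma:mutual_incoherence}.
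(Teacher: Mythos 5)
Your proposal is correct and follows essentially the same route as the paper: the same primal--dual witness construction with an oracle restricted problem, the identical decomposition of $\hat{\bZ}_{\bS_r^c}$ into an incoherence term controlled by $\|\bHhat_{\bS_r^c\bS_r}\bHhat_{\bS_r\bS_r}^{-1}\|_{\tB,\infty,1} \le 1-\alpha$ plus a noise term controlled by Hoeffding bounds on the bounded (but dependent) residuals, and the same $\|\cdot\|_{\tB,\infty,2}$ bound on $\bWhat_{\nbr.}-\bW^*_{\nbr.}$ for the minimum-signal argument. You also correctly flag the one genuine subtlety the paper handles, namely that concentration for $\frac{1}{N}\encode(\bXnr)^\T\bE$ must be argued from boundedness of $\be$ rather than from independence.
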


We prove Theorem \ref{theorem:main_result} in Appendix \ref{subsec:proof of theorem 1} by using a primal-dual witness construction. This approach has been previously used by \cite{wainwright2009sharp, ravikumar2010high, ravikumar2011high,  daneshmand2014estimating}. The primal-dual witness method requires a priori knowledge of the true parents and children for node $r$ and thus it is not a practical way to solve the optimization problem~\eqref{eq:our_model}. We only use it as a theoretical proof technique to establish statistical bounds for our result.

\subsection{Illustrative Example}
\label{subsec:illustrative example}

In order to illustrate our assumptions, consider the binary Bayesian network of four nodes in Figure \ref{fig:small bn} where each node  $X_i \in \{ \text{False}, \text{True} \}$, $ \forall i\in \{1,2,3,4\}$.  
\begin{figure}[h]
	\centering
	\begin{tikzpicture}[scale=0.7]
	\node[] at (0, 0)   (x1) {\small $1$};
	\node[] at (-1, -1)   (x2) {\small $2$};
	\node[] at (1, -1)   (x3) {\small $3$};
	\node[] at (0, -2)   (x4) {\small $4$};
	\draw[very thick, ->] (x1) -- (x2);
	\draw[very thick, ->] (x2) -- (x4);
	\draw[very thick, ->] (x3) -- (x4);
	\draw[very thick] (0, 0) circle (0.3cm); 
	\draw[very thick] (-1, -1) circle (0.3cm); 
	\draw[very thick] (1, -1) circle (0.3cm); 
	\draw[very thick] (0, -2) circle (0.3cm); 
	\end{tikzpicture}	
	\caption{\label{fig:small bn}Binary Bayesian Network}
\end{figure}
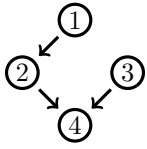

We use unweighted effects encoding, i.e., $\encode(\text{False}) = -1, \encode(\text{True}) = 1, \forall i \in \{1,2,3,4\}$. Note that since each node takes 2 values, Assumption \ref{assum:mutual_incoherence} reduces to  $|||\bH_{\nbr^c\nbr} \bH_{\nbr\nbr}^{-1}|||_{\infty,\infty} < 1 - \alpha$ for some  $\alpha \in (0, 1]$. We assume that $|\E[\encode(X_i)\encode(X_j)]| < 1, \forall i, j \in \{1,2,3,4\}, i \ne j$. Furthermore, we assume that $\E[\encode(X_1) \encode(X_4)] = q$, $ \E[\encode(X_1)\encode(X_2)] = \E[\encode(X_2) \encode(X_4)] =  \E[\encode(X_3) \encode(X_4)] = p$ and $\E[\encode(X_1)] = \E[\encode(X_3)] = 0$. Then the Assumption \ref{assum:mutual_incoherence} is equivalent to the condition that $|p| + |q| < 1$ (See Appendix \ref{subsec:discussion on assumption}).  

The third statement of Theorem \ref{theorem:main_result} requires that for every node $i \in \bpi_r \cup \bc_r$, $\|\f(\bW^*_i)\|_2$ should be sufficiently away from zero. By computing  $\min_{i \in \bpi_r \cup \bc_r} \|\f(\bW^*_i)\|_2 $ for each node we can conclude that the third statement holds as long as $ \min (|p|, |\frac{p}{1 + q}|)$ is not too close to zero(See Appendix \ref{subsec:on wstarmin}).

\subsection{Sample And Time Complexity}
\label{sec:sample and time complexity} 

If we have $N > \bar{\rho}^2  \rho_{\bpi_r \cup \bc_r}^3 \log \rho_{[n]}$ and Assumption \ref{assum:positive_definiteness_assumption} and \ref{assum:mutual_incoherence} are satisfied for every node then all our high probability statements are valid for every node $r$. Taking a union bound over $n$ nodes only adds a factor of $\log n$. Thus the sample complexity for our method is $O( \bar{\rho}^2    \rho_{\bpi_r \cup \bc_r}^3 \log  \rho_{[n]})$. As for the time complexity, we can formulate the block $\ell_{12}$-regularized multi-variate regression problem as a second order cone programing problem~\cite{obozinski2011support} which can be solved in polynomial time by interior point methods~\cite{boyd2004convex}. 

\section{Experimental Results}
\label{sec:experimental results}

We performed three sets of experiments to validate our theoretical results. First, we conducted experiments on synthetic data. Second, we compared our method with other well known methods on benchmark Bayesian networks. Finally, we tested our method on real world datasets. We measure quality of recovery by computing precision and recall. Higher precision implies that we only recover true edges while higher recall implies that all the true edges are recovered. They are formally defined as below,
\begin{align*}
\text{Precision} &=\frac{\sum_{r=1}^n |(\hat{\bpi_r} \cup \hat{\bc_r}) \cap (\bpi_r \cup \bc_r)|}{ \sum_{r=1}^n |(\hat{\bpi_r} \cup \hat{\bc_r})|} \\
 \text{Recall} &= \frac{\sum_{r=1}^n |(\hat{\bpi_r} \cup \hat{\bc_r}) \cap (\bpi_r \cup \bc_r)|}{  \sum_{r=1}^n |(\bpi_r \cup \bc_r)|}
\end{align*}
where $(\hat{\bpi_r} \cup \hat{\bc_r})$ is the recovered support set (both parents and children). We compare performance of various methods by computing $F_1$-score as following:
\begin{align*}
	F_1 \text{-score} = \frac{2 \times \text{Precision} \times \text{Recall} }{\text{Precision} + \text{Recall}}
\end{align*}

\subsection{Experiments With Synthetic Data}
\label{subsec:synthetic experiments}

We verify our theoretical results by running our method on synthetic data.  We conduct experiments on Bayesian networks with $n = 20, 200$ and $500$ nodes. Each node of Bayesian network can take $k = 4$ categorical values. For each $n$, we generate $N = 10^{CP} \log (k - 1) n $ i.i.d. samples. $CP$ is a control parameter and is varied to generate different number of samples. Using these samples, our method learns the skeleton of the Bayesian network by performing block $\ell_{12}$-regularized multivariate regression for each node. The regularization parameter $\lambdahat$ for each regression problem is set proportional to $\sqrt{\frac{\log (k - 1) n}{N}}$ until it becomes smaller than a constant $\delta$. This matches our condition in Theorem \ref{theorem:main_result} where initially $\sqrt{\frac{\log (k-1) n}{N}}$ dominates but then a constant term dominates as $N$ gets large. The quality of skeleton recovery is measured by computing precision and recall which we report in Figure \ref{fig:test}. Each data point in Figure \ref{fig:test} denotes averaged value across $5$ independent experiments.  

\paragraph{Arbitrary conditional probability tables.} First, we pick a causal order for nodes uniformly at random. Then we construct a DAG by allowing each node to have an edge with any preceding node in the ordering with $0.5$ probability. We induce sparsity by performing transitive reduction~\cite{aho1972transitive} of the DAG.
Each node is then assigned a conditional probability table (CPT) conditioned on its parents. The entries in CPTs are chosen uniformly at random from $[0.1, 0.9]$. We use CPT of the node to sample its value given its parents.  

\begin{figure}[ht]
\centering
\begin{subfigure}{.5\textwidth}
\centering
\includegraphics[width=\linewidth]{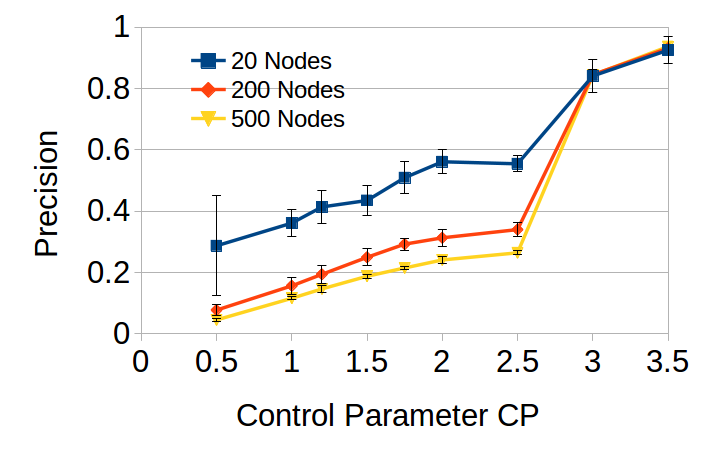}
\caption{Precision vs. CP ($k=4$)}
\label{fig:sub1}
\end{subfigure}%
\begin{subfigure}{.5\textwidth}
\centering
\includegraphics[width=\linewidth]{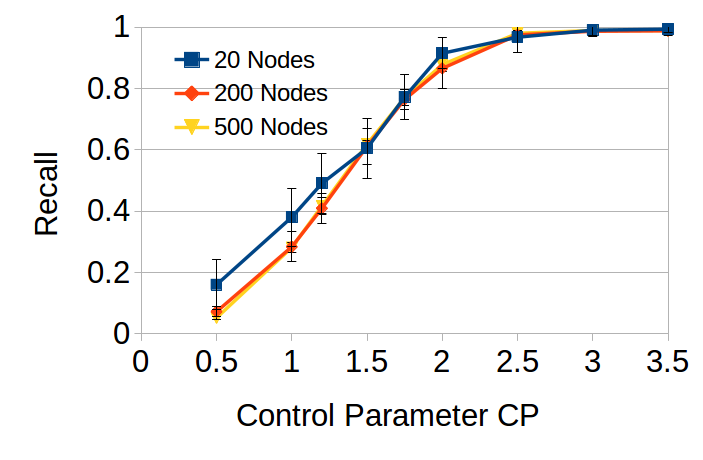}
\caption{Recall vs. CP ($k=4$)}
\label{fig:sub2}
\end{subfigure}
\caption{Plots of Precision And Recall Versus The Control Parameter $CP$ For Bayesian Networks On $n = 20, 200$ And $500$ Nodes With $N = 10^{CP} \log (k-1)n$ Samples}
\label{fig:test}
\end{figure}

Figure \ref{fig:sub1} and \ref{fig:sub2} show the precision and recall respectively for our method with increasing number of samples. In Figure \ref{fig:sub1}, we see that precision approaches one, with enough samples. In Figure \ref{fig:sub2}, recall also approaches one as we increase number of samples. Notice that the different curves for different number of nodes ($n = 20, 200$ and $500$) line up with one another quite well. This matches with our theoretical results and shows that for a Bayesian network with a constant degree, our method can efficiently recover the skeleton with $N > O(\log (k-1) n)$.  

\begin{table*}[!ht]
	\caption{ $F_1$-Scores and standard errors at $95\%$ confidence level on benchmark Bayesian networks. First two columns compare skeletons. Last six columns compare DAGs.}
	\label{table:benchmark compare f1}
	\begin{center}
	\setlength{\tabcolsep}{2pt}
	\small
	\begin{tabular}{lc|c||c|c|c|c|c|c}
		\toprule
		& \multicolumn{8}{c}{{\bf $\bm{F_1}$-Score}}                   \\
		\cmidrule(r){2-9}
		{\bf Network}  & Our & MMPC & Our Method & MMHC& Greedy  & Sparse & Optimal & Exact \\
		($\bm{n = |V|}$) & Method &  & + Greedy & & Search & Candidate & Reinsertion & LP \\
		\midrule
		alarm (37) & $0.74 \pm 0.01$ & $0.90 \pm 0.01$ &\cellcolor{blue!20}$0.92 \pm 0.02$ &\cellcolor{blue!20}$0.93 \pm 0.02$ & $0.87 \pm 0.07$ & $0.54 \pm 0.06$ &  $0.86 \pm 0.04$ & $0.82 \pm 0.02$ \\
		andes (223) &\cellcolor{blue!20}$0.77 \pm 0.01$ &\cellcolor{blue!20} $0.79 \pm 0.01$ & $0.82 \pm 0.01$ &$0.88 \pm 0.01$ & $0.79 \pm 0.02$ & $0.05 \pm 0.02$ &  $0.74 \pm 0.01$ & $0.67 \pm 0.01$ \\
		barley (48) & $0.37 \pm 0.01$ & $0.60 \pm 0.01$ &\cellcolor{blue!20} $0.74 \pm 0.06$ &\cellcolor{blue!20}$0.73 \pm 0.05$ & $0.61 \pm 0.05$ & $0.61 \pm 0.04$ &  $0.67 \pm 0.1$ & $0.77 \pm 0.01$ \\
		carpo (60) & $0.74 \pm 0.01$ & $0.78 \pm 0.01$ &\cellcolor{blue!20} $0.89 \pm 0.01$ &\cellcolor{blue!20}$0.86 \pm 0.03$ & $0.78 \pm 0.02$ & $0.15 \pm 0.04$ &  $0.82 \pm 0.02$ & $0.69 \pm 0.02$ \\
		child (20) &\cellcolor{blue!20} $0.94 \pm 0.01$ &\cellcolor{blue!20} $0.93 \pm 0.01$ & $0.98 \pm 0.00$ &$1 \pm 0.00$ &$1 \pm 0.00$ & $0.79 \pm 0.11$ &  $0.96 \pm 0.05$ & $1 \pm 0.00$ \\
		hailfinder (56) &\cellcolor{blue!20} $0.57 \pm 0.02$ &\cellcolor{blue!20} $0.58 \pm 0.01$ & $0.65 \pm 0.02$ &$0.75 \pm 0.14$ & $0.68 \pm 0.15$ & $0.46 \pm 0.08$ &  $0.71 \pm 0.1$ & $0.68 \pm 0.04$ \\
		mildew (35) & $0.30 \pm 0.01$ & $0.45 \pm 0.02$ &\cellcolor{blue!20} $0.83 \pm 0.01$ &\cellcolor{blue!20}$0.77 \pm 0.00$ & $0.70 \pm 0.08$ & $0.68 \pm 0.04$ &  $0.65 \pm 0.08$ & $0.72 \pm 0.00$ \\
		water (32) & $0.59 \pm 0.02$ & $0.63 \pm 0.01$ &\cellcolor{blue!20}$0.61 \pm 0.02$ &\cellcolor{blue!20}$0.60 \pm 0.05$ & $0.51 \pm 0.06$ & $0.28 \pm 0.03$ &  $0.62 \pm 0.06$ & $0.61 \pm 0.03$ \\
		win95pts (76) & $0.67 \pm 0.01$ & $0.77 \pm 0.03$ &\cellcolor{blue!20} $0.76 \pm 0.01$ &\cellcolor{blue!20}$0.79 \pm 0.03$ & $0.62 \pm 0.02$ & $0.13 \pm 0.06$ &  $0.62 \pm 0.04$ & $0.59 \pm 0.02$ \\
		\bottomrule
	\end{tabular}
	\end{center}
\end{table*} 

\subsection{Experiments On Benchmark Networks}
\label{subsec:experiments on benchmark networks}

We compared the performance of our method with state-of-the-art techniques by running experiments on benchmark Bayesian networks, which are publicly available at \url{http://compbio.cs.huji.ac.il/Repository/networks.html} and \url{http://www.bnlearn.com/bnrepository/}. The experiments were conducted by generating $5$ independent instances of $5000$ samples using the original conditional probability tables of the benchmark networks. The regularization parameter for node $i$, which can take $k_i$ categorical values, is chosen to scale with $\lambdahat =  c_1 \sqrt{\frac{\log (k_i-1)(\frac{\sum_{j=1}^{n} k_j}{n} - 1) n}{N}} + c_2$ for constants $c_1$ and $c_2$. We report the average $F_1$-score across $5$ independent runs. We compared our method with the max-min parent and children (MMPC) algorithm~\cite{tsamardinos2003time} which also returns an undirected skeleton.  To compare our method with techniques that produce DAGs (such as max-min hill climbing (MMHC) \cite{tsamardinos2006max}, greedy search,
integer linear programming (LP) \cite{cussens2012bayesiancut} , sparse candidate \cite{friedman1999learning}, and
optimal reinsertion operator \cite{moore2003optimal}) and to provide further insight, we oriented the edges in the skeleton produced by our method by using greedy hill-climbing search with tabu list. This setup is similar to the one used in \cite{tsamardinos2006max}. 

In Table~\ref{table:benchmark compare f1}, we observe that the performance of our method is comparable to MMPC. Our method also performs comparably to other state-of-the-art techniques when we run our method in conjunction with greedy hill climbing.  

\begin{table}[!h]
	\caption{Negative log-likelihood and standard errors at $95\%$ confidence level on real world datasets.}
	\label{table:realworld compare}
	\begin{center}
		\begin{tabular}{ll|l|l}
			\toprule
			& \multicolumn{3}{c}{ \bf Negative Log-likelihood}                   \\
			\cmidrule(r){2-4}
			{\bf Network}  & Our Method & MMHC & Greedy  \\
			($n = |V|$)  & + Greedy &  &   \\
			\midrule
			dna	(180) & $80.37 \pm 0.24$ & $80.57\pm0.24$ & $80.45 \pm 0.25$ \\
			moviereview (1001)& $339.27\pm8.85$ &$340.60\pm9.03$ & $343.64\pm9.28$\\
			retail (135) & $10.88\pm0.21$ & $10.88\pm0.21$ & $10.88\pm0.21$ \\
			audio(125) &$40.50 \pm	0.57$ &	$40.28\pm	0.57$&	$40.27\pm	0.57$ \\
			autos (26) &$18.61\pm2.98$ &$24.45\pm1.74$ &$18.62\pm3.27$\\
			jester(100) & $53.78\pm 0.39$	& $53.53 \pm0.40$ & $53.53 \pm 0.40$ \\
			netflix	(100) & $57.02\pm0.23$ & $56.85\pm0.23$ & $56.85\pm0.23$ \\
			r52	(889) & $90.46\pm3.43$ & $86.92 \pm 3.46$ & $87.30\pm3.49$ \\
			student-por (33) &$32.22\pm0.72$ & $33.88\pm0.69$&$32.08\pm0.73$\\
			tmovie(500)  & $55.33\pm4.63$ & $54.59\pm4.65$ & $54.96\pm4.75$ \\
			webkb (839)  & $159.39 \pm 6.34$ & $156.89\pm6.28$ & $157.75\pm6.46$ \\
			promoters (58) &$78.03 \pm 1.84$ & $79.01\pm2.10$ & $79.01\pm2.10$ \\
			sponge(45) & $26.36\pm4.64$ &$29.50\pm3.54$ & $25.49\pm4.65$\\
			triazines(59) & $13.78\pm1.75$&$13.89\pm1.60$ &$12.83\pm2.10$ \\
			wiki4he(53) & $57.99\pm1.29$ & $58.08\pm1.34$&$58.05\pm1.34$\\
			\bottomrule
		\end{tabular}
	\end{center}
\end{table}

\subsection{Experiments With Real World Datasets}

Finally, we conducted experiments on the real world datasets. For our experiments, we picked a mixture of binary and discrete real world datasets from \cite{malone2015impact} and \cite{van2012markov}.We divided the datasets in \cite{malone2015impact} into training and testing sets. The datasets provided in \cite{van2012markov} have already been divided into training and testing sets by the original authors. The training data was fed into the different algorithms. For real world datasets, we do not have access to any underlying true Bayesian network structure, thus performance is measured by the negative log-likelihood of samples in the testing set. Since our method only recovers the skeleton, in order to measure likelihood, edges were oriented using a greedy hill-climbing search with tabu list.In Table \ref{table:realworld compare}, we observe that performance of our method is similar to MMHC and greedy search. 

\section{Concluding Remarks}
\label{sec:conclusions}

We propose a method for exact structure recovery of discrete Bayesian network under some technical conditions. It runs in polynomial time and has polynomial sample complexity. We neither assume any specific data generation process nor do we impose any direct assumptions on the conditional probability distribution of the nodes. Rather, we control the interaction between node pairs with our assumptions. In practice, our method can be used for any discrete Bayesian network irrespective of whether the assumptions are satisfied, albeit without any guarantees.

\vfill
\onecolumn
\appendix 

\section{On Assumption \ref{assum:positive_definiteness_assumption} }
\label{sec:pos def normal encoding}

We show that Assumption \ref{assum:positive_definiteness_assumption} holds with very mild conditions.
\begin{lemma}
	\label{lem:pos def normal encoding}
	Consider a matrix $\bM$ with each row $i$ being one realization $\encode(x_{\bar r}^i)_{\nbr}$ of $\encode(X_{\bar r})_{\nbr}$. If each $\encode(x_{\bar r}^i)_{\nbr}$ occurs with a probability $p_i > 0$ then Assumption \ref{assum:positive_definiteness_assumption} holds as long as columns of matrix $\bM$ are linearly independent.   
\end{lemma}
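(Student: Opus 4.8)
The plan is to rewrite the restricted population Hessian $\bH_{\bS_r \bS_r}$ as a weighted Gram matrix built from the possible values of the encoded neighborhood vector, and then to read off positive definiteness directly from the linear-independence hypothesis. The key structural fact is that the encoder maps into the finite set $\calB^{\rho_{[n]\backslash r}}$, so the random vector $\encode(X_{\bar r})_{\nbr}$ takes only finitely many distinct values. I interpret the rows of $\bM$ as enumerating exactly these distinct realizations: let them be $v_1^\T, \dots, v_k^\T$, where $v_i \triangleq \encode(x_{\bar r}^i)_{\nbr} \in \real^{|\bS_r|}$ occurs with probability $p_i > 0$, so that $\sum_{i=1}^k p_i = 1$.

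First I would expand the expectation defining the block $\bH_{\bS_r \bS_r} = \E_{\mathbb{X}}[\encode(X_{\bar r})_{\nbr} \encode(X_{\bar r})_{\nbr}^\T]$. Since the underlying random vector is supported on finitely many values, the law of total expectation turns this into a finite sum of outer products, which factors as
\begin{align*}
\bH_{\bS_r \bS_r} = \sum_{i=1}^k p_i\, v_i v_i^\T = \bM^\T D \bM, \qquad D = \mathrm{diag}(p_1, \dots, p_k),
\end{align*}
where $D \succ 0$ because every $p_i > 0$. (For the columns of $\bM$ to be independent one needs $k \ge |\bS_r|$, which is implicitly assumed by the hypothesis.)

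Next I would evaluate the associated quadratic form on an arbitrary $\bx \in \real^{|\bS_r|}$: we have $\bx^\T \bH_{\bS_r \bS_r} \bx = \sum_{i=1}^k p_i (v_i^\T \bx)^2 = \sum_{i=1}^k p_i (\bM \bx)_i^2 \ge 0$, with equality if and only if $\bM \bx = \bzero$. Linear independence of the columns of $\bM$ means precisely that $\bM$ has full column rank, so $\bM \bx = \bzero$ forces $\bx = \bzero$. Hence the quadratic form is strictly positive for every nonzero $\bx$, which is exactly $\bH_{\bS_r \bS_r} \succ 0$, and one may take the constant in Assumption~\ref{assum:positive_definiteness_assumption} to be $C = \eval_{\min}(\bM^\T D \bM) > 0$.

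There is no genuine analytic obstacle here; the result is essentially a one-line consequence of the factorization $\bM^\T D \bM$ together with full column rank. The only points requiring care are bookkeeping ones: justifying the reduction of the expectation to a finite weighted sum (valid because $\calB$ is finite), and ensuring the linear-independence hypothesis is applied to the \emph{columns} of $\bM$ rather than its rows, since it is full column rank — not full row rank — that annihilates the kernel of the quadratic form.
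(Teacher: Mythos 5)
Your proof is correct and rests on the same core fact as the paper's: the quadratic form of $\bH_{\bS_r \bS_r}$ is the weighted sum of squares $\sum_j p_j (v_j^\T \bx)^2$, which vanishes only when $\bM \bx = \bzero$, and full column rank of $\bM$ then forces $\bx = \bzero$. The only difference is presentational --- the paper reaches this quadratic form indirectly by restricting the substitute loss to a line and differentiating twice, whereas you factor $\bH_{\bS_r \bS_r} = \bM^\T D \bM$ directly, which is cleaner.
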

\begin{proof}
	We show that Assumption \ref{assum:positive_definiteness_assumption} always holds for commonly used encoding scheme such as dummy encoding and effects encoding as long as $\encode(X_{\bar r})_{\nbr}$ takes all its realizations with some positive probability. To show this we consider the following loss function, 
	\begin{align*}
	\Loss(\bW_{\nbr .}) \triangleq \frac{1}{2} \E[\sum_{i=1}^{m_r -1}( \encode(X_{\bar r})_{\nbr}^\T \bW_{\nbr i} - \encode(X_r)_i)^2]
	\end{align*} 
	We will prove that $\nabla_{\bW_{\nbr .}}^2 \Loss(\bW_{\nbr .}) \succ 0$ . To do this, we restrict $\Loss(\bW_{\nbr .})$ to a line by taking $\bW_{\nbr .} = \bW^0_{\nbr .} + t \bW_{\nbr .}^1$ for any $t \in \{-\infty, \infty \}$ such that $\bW_{\nbr .}^1 \ne \bzero$. Then,
	\begin{align*}
	g(t) &\triangleq \frac{1}{2} \E[ \sum_{i=1}^{m_r -1}( \encode(X_{\bar r})_{\nbr}^\T (\bW_{\nbr i}^0 + t \bW_{ \nbr i}^1) - \encode(X_r)_i)^2 ]\\
	\frac{d^2 g(t)}{dt^2} &= \E[\sum_{i=1}^{m_r -1} (\encode(X_{\bar r})_{\nbr}^\T \bW_{\nbr i}^1)^2] 
	\end{align*}
	We assume that each realization $\encode(x_{\bar r}^j)_{\nbr}$ of $\encode(X_{\bar r})_{\nbr}$ happens with a probability $p_j > 0, \forall j \in [\prod_{i \in \bpi_r \cup \bc_r} m_i]$. Then,
	\begin{align*}
	\frac{d^2 g(t)}{dt^2} &= \sum_{j=1}^{\prod_{i \in \bpi_r \cup \bc_r} m_i} p_j \sum_{i=1}^{m_r -1} (\encode(x_{\bar r}^j)_{\nbr}^\T \bW_{\nbr i}^1)^2
	\end{align*}  
	Since $p_j > 0$, it follows that $\frac{d^2 g(t)}{dt^2} = 0 \iff \encode(x_{\bar r}^j)_{\nbr}^\T \bW_{\nbr i}^1 = 0, \forall i \in [\rho_r],j \in [\prod_{i \in \bpi_r \cup \bc_r} m_i]$. This can be equivalently written as,
	\begin{align*}
	\begin{bmatrix}
	\encode(x_{\bar r}^1)^\T \\
	\encode(x_{\bar r}^2)^\T \\
	\vdots \\
	\encode(x_{\bar r}^{\prod_{i \in \bpi_r \cup \bc_r} m_i})^\T 
	\end{bmatrix} \bW_{\nbr i}^1 \triangleq \bM \bW_{\nbr i}^1 = \bzero, \quad \forall i \in  [m_r - 1]
	\end{align*} 
	If we take encoding scheme to be dummy encoding or unweighted effects encoding then the above holds if and only if $\bW_{\nbr i}^1 = \bzero,  \forall i \in  [m_r - 1]$. This implies $\bW_{\nbr}^1 = \bzero$ which is not possible. Since choice of $t$ is completely arbitrary, it follows that $\nabla_{\bW_{\nbr .}}^2 \Loss(\bW_{\nbr .}) \succ 0$. Now, 
	\begin{align*}
	\nabla_{\bW_{\nbr .}}^2 \Loss(\bW_{\nbr .}) &= \begin{bmatrix}
	\bH_{\nbr\nbr} & \bzero & \dots & \bzero \\
	\bzero & \bH_{\nbr\nbr} & \dots & \bzero \\
	\vdots & \vdots & \dots & \vdots \\
	\bzero & \bzero & \dots & \bH_{\nbr\nbr} 
	\end{bmatrix}
	\end{align*}
	which is positive definite if and only if $\bH_{\nbr\nbr} \succ 0$.
\end{proof}

\section{On Mutual Incoherence With Markov Blanket}
\label{sec:on markov blanket}

We compared mutual incoherence assumption defined on the parents and children of a node (MIPC) in Assumption \ref{assum:mutual_incoherence} with the one defined on its Markov blanket (MIMB). We created $10$ different synthetic Bayesian networks on $n = 100, 500$ and $1000$ binary nodes. The nodes were assigned conditional probability tables (CPTs) with entries between $[0.1, 0.9]$. We generated $5000$ samples for each experiment and then computed validity of Assumption \ref{assum:mutual_incoherence} for each network. The results of the experiment are listed in the Table \ref{table:assum2 check}.

\begin{table}[h]
	\vspace{-0.1in}   \caption{Comparison of mutual incoherence assumption on different supports} 
	\label{table:assum2 check}
	\begin{center}
		\begin{tabular}{lllll}
			\toprule
			Number & Max  & Support : Parent and Children  & Support : Markov Blanket &  MIPC is   \\
			of nodes  &  Degree & (MIPC) holds & (MIMB)  holds &  weaker than MIMB\\
			\midrule
			100  & 7 & 99.7\% & 97.6\% & 91.9\% \\
			500 & 7 & 99.76\% & 96.82\% & 93.26\% \\
			1000  & 7 & 99.62\% & 95.95\% & 92.97\% \\
			\bottomrule
		\end{tabular}
	\end{center}
\end{table}  

We see that mutual incoherence assumption defined on parents and children holds more often than mutual incoherence assumption defined on Markov blanket. Also, it is easier to fulfill mutual incoherence assumption with parents and children as support than the case when Markov blanket is used as support. This motivates us to define mutual incoherence assumption on parents and children as support.

\section{Proof of Lemma \ref{lemma:sample_positive_definiteness}} 
\label{proof:sample_positive_definiteness}
\begin{proof} 
	$\bH_{\bS_r \bS_r} \succ 0$ can be equivalently written as $\eval_{\min}(\bH_{\bS_r \bS_r}) \geq C > 0$
	where $\eval_{\min}$ denotes the minimum eigenvalue. We define $Z_{jk}$ as,
	
	\begin{align*}
	\begin{split}
	[\bHhat - \bH]_{jk} &= Z_{jk} \\
	&= \frac{1}{N} \sum_{l=1}^N (\encode(\bX_{\bar r})_j^l \encode(\bX_{\bar r})_k^l - \E_{\mathbb{X}}[\encode(X_{\bar r})_j \encode(X_{\bar r})_k] ) \\
	&= \frac{1}{N}\sum_{l=1}^N Z_{jk}^l
	\end{split}
	\end{align*}
	 Note that $Z_{jk}^l$ are i.i.d. random variables across $l=[N]$ with zero mean.  Furthermore, $ | Z_{jk}^l | \leq 2 $ as we assumed $\calB \in \{-1, 0, 1\}$. Thus $Z_{jk}$ can be treated as a subGaussian random variable and by using the Azuma-Hoeffding~\cite{hoeffding1963probability} inequality we can write,
	\begin{align} 
	\label{eq:azuma} 
	\begin{split} 
	\prob[(Z_{jk})^2 \geq \epsilon^2] &= \prob\big[ | \frac{1}{N} \sum_{l=1}^N Z_{jk}^l | \geq \epsilon
	\big], \\ 
	&\leq 2 \exp(\frac{-\epsilon^2 N}{8}) \ . 
	\end{split} 
	\end{align}
	\noindent Now,
	\begin{align*} 
	\begin{split} 
	\eval_{\min}(\bH_{\bS_r \bS_r}) &= \min_{\|\bx\|_2 = 1} \bx^\T \bH_{\bS_r \bS_r} \bx \\ 
	&= \min_{\|\bx\|_2 = 1} \big(\bx^\T \bHhat_{\bS_r \bS_r} \bx + \bx^\T (\bH_{\bS_r \bS_r} -  \bHhat_{\bS_r \bS_r}) \bx \big) \\ 
	&\leq \min_{\|\bx\|_2 = 1} \big(\bx^\T \bHhat_{\bS_r \bS_r} \bx \big) + \bq^\T (\bH_{\bS_r \bS_r} - \bHhat_{\bS_r \bS_r}) \bq \\\
	&= \eval_{\min}(\bHhat_{\bS_r \bS_r}) +  \bq^\T (\bH_{\bS_r \bS_r} - \bHhat_{\bS_r \bS_r}) \bq 
	\end{split} 
	\end{align*}
	\noindent where $\| \bq \|_2 = 1$.
	\begin{align*} 
	\begin{split} 
	\eval_{\min}(\bHhat_{\bS_r \bS_r}) &\geq \eval_{\min}(\bH_{\bS_r \bS_r}) - \bq^\T (\bH_{\bS_r \bS_r} - \bHhat_{\bS_r \bS_r}) \bq \\ 
	&\geq \eval_{\min}(\bH_{\bS_r \bS_r}) - ||| \bH_{\bS_r \bS_r} - \bHhat_{\bS_r \bS_r} |||_{2, 2} \ . 
	\end{split}
	\end{align*}
	\noindent In the above, $||| . |||_{2,2}$ is the spectral norm which is bounded above by the Frobenius norm.
	\begin{align} 
	\label{eq:lambdamin} 
	\begin{split} 
	\eval_{\min}(\bHhat_{\bS_r \bS_r}) &\geq \eval_{\min}(\bH_{\bS_r \bS_r}) - (\sum_{j=1}^{|\bS_r|} \sum_{k=1}^{|\bS_r|} (\bH_{\bS_r \bS_r} - \bHhat_{\bS_r \bS_r})_{jk}^2)^{\frac{1}{2}} \ .\\
	\end{split}
	\end{align}
	\noindent Taking $\epsilon^2 = \frac{\delta^2}{|\bS_r|^2}$ in equation~\eqref{eq:azuma} and using the union bound over
	$|\bS_r|^2$ indexes,
	\begin{align} 
	\label{eq:sample_pop_frobeneus} 
	\begin{split} 
	\prob[||| \bH_{\bS_r \bS_r} - \bHhat_{\bS_r \bS_r} |||_2 &\geq \delta] \leq 2 \exp(- \frac{\delta^2 N }{8 \rho_{\bpi_r \cup \bc_r}^2} +  2 \log \rho_{\bpi_r \cup \bc_r}) \ .  
	\end{split} 
	\end{align}
	Using equations~\eqref{eq:lambdamin} and~\eqref{eq:sample_pop_frobeneus}, it follows that,
	\begin{align} 
	\label{eq:min_eigval_sample} 
	\eval_{\min}(\bHhat_{\bS_r \bS_r}) \geq C - \delta > 0  
	\end{align}
	with probability at least $2 \exp(- \frac{\delta^2 N }{8 \rho_{\bpi_r \cup \bc_r}^2} + 2 \log \rho_{\bpi_r \cup \bc_r}) $.  
\end{proof}

\section{Proof of Lemma \ref{lemma:mutual_incoherence}}
\label{proof:mutual_incoherence}

\begin{proof}
	Using a proof technique similar to \cite{ravikumar2010high}, we can rewrite $\bHhat_{\bS_r^c \bS_r}(\bHhat_{\bS_r \bS_r})^{-1}$ as the sum of four terms defined as:
	\begin{align}
	\label{eq:sample mutual incoherence}
	\begin{split}
	\bHhat_{\bS_r^c \bS_r}(\bHhat_{\bS_r \bS_r})^{-1} &= T_1 + T_2 + T_3 + T4  \\
	|||\bHhat_{\bS_r^c \bS_r}(\bHhat_{\bS_r \bS_r})^{-1}|||_{\tB, \infty, 1} &\leq |||T_1|||_{\tB, \infty, 1} + |||T_2|||_{\tB, \infty, 1} + |||T_3|||_{\tB, \infty, 1} + |||T_4|||_{\tB, \infty, 1}
	\end{split}
	\end{align}
	where,
	\begin{align*}
	T_1 &\triangleq \bH_{\bS_r^c \bS_r}[(\bHhat_{\bS_r \bS_r})^{-1} - \bH_{\bS_r \bS_r}^{-1} ] \\
	T_2 &\triangleq [\bHhat_{\bS_r^c \bS_r} - \bH_{\bS_r^c \bS_r}]\bH_{\bS_r \bS_r}^{-1}\\
	T_3 &\triangleq [\bHhat_{\bS_r^c \bS_r} - \bH_{\bS_r^c \bS_r}][(\bHhat_{\bS_r \bS_r})^{-1} - \bH_{\bS_r \bS_r}^{-1} ] \\
	T_4 &\triangleq \bH_{\bS_r^c \bS_r}\bH_{\bS_r \bS_r}^{-1}
	\end{align*}
	and each $T_i$ is treated as a row partitioned block matrix of $ |( \bpi_r \cup \bc_r)^c|$ blocks with each block containing $\rho_i$ rows where $i \in ( \bpi_r \cup \bc_r)^c$.
	From Mutual incoherence Assumption \ref{assum:mutual_incoherence}, it is clear that $|||T_4|||_{\tB, \infty, 1} \leq 1 - \alpha$. We will control the other three terms by using the following lemma:
	\begin{lemma}
		\label{lem:control_T}
		For any $\delta > 0$, the following holds:
		\begin{align}
		\label{eq:Teq1}
		\begin{split}
		&\prob[|||\bHhat_{\bS_r^c \bS_r} - \bH_{\bS_r^c \bS_r}|||_{\tB, \infty, 1} \geq \delta] \leq 2 \exp(\frac{-\delta^2 N}{8 \bar{\rho}  \rho_{\bpi_r \cup \bc_r}^2} + \log \rho_{(\bpi_r \cup \bc_r)^c} + \log \rho_{\bpi_r \cup \bc_r} )
		\end{split}
		\end{align}
		\begin{align}
		\label{eq:Teq2}
		\begin{split}
		\prob[|||\bHhat_{\bS_r \bS_r} - \bH_{\bS_r \bS_r}|||_{\infty, \infty} \geq \delta] \leq  2 \exp(\frac{-\delta^2 N}{8 \rho_{\bpi_r \cup \bc_r}^2  } + 2 \log \rho_{\bpi_r \cup \bc_r})
		\end{split}
		\end{align}
		\begin{align}
		\label{eq:Teq3}
		\begin{split}
		&\prob[|||(\bHhat_{\bS_r \bS_r})^{-1} - (\bH_{\bS_r \bS_r})^{-1}|||_{\infty, \infty} \geq \delta] \leq 
		2 \exp(- \frac{\delta^2 C^4 N }{32 \rho_{\bpi_r \cup \bc_r}^3} + 2 \log\rho_{\bpi_r \cup \bc_r}
		+ 2 \exp(- \frac{C^2 N }{32 \rho_{\bpi_r \cup \bc_r}^2} + 2 \log \rho_{\bpi_r \cup \bc_r})
		\end{split}
		\end{align}
	\end{lemma}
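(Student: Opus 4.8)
The plan is to build all three bounds on the single entry-wise concentration inequality already established in \eqref{eq:azuma}: writing $[\bHhat - \bH]_{jk} = \frac{1}{N}\sum_{l=1}^N Z_{jk}^l$ with $|Z_{jk}^l| \leq 2$ i.i.d. and zero-mean, Hoeffding gives $\prob[|[\bHhat-\bH]_{jk}| \geq \epsilon] \leq 2\exp(-\epsilon^2 N/8)$. The three statements then differ only in which sub-block of $\bHhat - \bH$ is relevant, how the matrix norm in question is expressed as a function of the entries, and how many entries we union-bound over.

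For \eqref{eq:Teq2} I would first write the $\ell_\infty$-operator norm as a maximum row sum, $|||\bHhat_{\bS_r\bS_r} - \bH_{\bS_r\bS_r}|||_{\infty,\infty} = \max_{j\in\bS_r}\sum_{k\in\bS_r}|[\bHhat-\bH]_{jk}|$. Choosing the per-entry threshold $\epsilon = \delta/\rho_{\bpi_r\cup\bc_r}$ forces every row sum below $\delta$, and a union bound over the $|\bS_r|^2 = \rho_{\bpi_r\cup\bc_r}^2$ entries produces the factor $2\log\rho_{\bpi_r\cup\bc_r}$. The argument for \eqref{eq:Teq1} is the same in spirit but now organized by the row-partitioned blocks of $\bHhat_{\bS_r^c\bS_r} - \bH_{\bS_r^c\bS_r}$: by definition $\|\cdot\|_{\tB,\infty,1}$ is the maximum over blocks $i\in(\bpi_r\cup\bc_r)^c$ of the $\ell_1$ norm of the vectorized block, each block carrying at most $\bar{\rho}\,\rho_{\bpi_r\cup\bc_r}$ entries. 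Passing from $\ell_1$ to $\ell_2$ by Cauchy--Schwarz and choosing $\epsilon$ accordingly, the union bound runs over all $\rho_{(\bpi_r\cup\bc_r)^c}\rho_{\bpi_r\cup\bc_r}$ entries, which is exactly where the two logarithmic terms $\log\rho_{(\bpi_r\cup\bc_r)^c}+\log\rho_{\bpi_r\cup\bc_r}$ come from.

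The genuinely harder estimate is \eqref{eq:Teq3}, the perturbation of the inverse. Here I would start from the resolvent identity $(\bHhat_{\bS_r\bS_r})^{-1} - \bH_{\bS_r\bS_r}^{-1} = (\bHhat_{\bS_r\bS_r})^{-1}(\bH_{\bS_r\bS_r} - \bHhat_{\bS_r\bS_r})\bH_{\bS_r\bS_r}^{-1}$, pass from the $\ell_\infty$-operator norm to the spectral norm at the cost of a single $\sqrt{\rho_{\bpi_r\cup\bc_r}}$ factor, and then use submultiplicativity of the spectral norm to split the right-hand side into three pieces. Assumption \ref{assum:positive_definiteness_assumption} controls one factor deterministically, $|||\bH_{\bS_r\bS_r}^{-1}|||_{2,2} \leq 1/C$. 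The middle factor $|||\bHhat_{\bS_r\bS_r} - \bH_{\bS_r\bS_r}|||_{2,2}$ I would bound through the Frobenius norm exactly as in \eqref{eq:sample_pop_frobeneus}, but with the rescaled threshold $\delta C^2/(2\sqrt{\rho_{\bpi_r\cup\bc_r}})$; squaring this and dividing by $8\rho_{\bpi_r\cup\bc_r}^2$ yields the $\delta^2 C^4 N / (32\rho_{\bpi_r\cup\bc_r}^3)$ rate of the first exponential term.

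The main obstacle is the remaining random factor $|||(\bHhat_{\bS_r\bS_r})^{-1}|||_{2,2}$, since an a priori bound on the inverse of a random matrix is unavailable. The key step is to first establish the high-probability event $\{\eval_{\min}(\bHhat_{\bS_r\bS_r}) \geq C/2\}$, on which $|||(\bHhat_{\bS_r\bS_r})^{-1}|||_{2,2} \leq 2/C$; this is precisely the computation behind Lemma \ref{lemma:sample_positive_definiteness} run with $\delta = C/2$, and its failure probability $2\exp(-C^2 N/(32\rho_{\bpi_r\cup\bc_r}^2) + 2\log\rho_{\bpi_r\cup\bc_r})$ is the second exponential term in \eqref{eq:Teq3}. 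A union bound over this event and the Frobenius event above then delivers the claim. The delicate bookkeeping is matching the dimension-dependent conversion factors so that only one $\sqrt{\rho_{\bpi_r\cup\bc_r}}$ survives, giving $\rho_{\bpi_r\cup\bc_r}^3$ rather than $\rho_{\bpi_r\cup\bc_r}^4$ in the denominator; this is why I would route the middle term through the spectral and Frobenius norms rather than through the $\ell_\infty$-operator norm.
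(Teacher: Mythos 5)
Your proposal is correct and follows essentially the same route as the paper's own proof: entrywise Hoeffding bounds with per\nobreakdash-entry thresholds scaled by the block or row dimensions for \eqref{eq:Teq1} and \eqref{eq:Teq2}, and for \eqref{eq:Teq3} the identity $(\bHhat_{\bS_r \bS_r})^{-1} - \bH_{\bS_r \bS_r}^{-1} = \bH_{\bS_r \bS_r}^{-1}[\bH_{\bS_r \bS_r} - \bHhat_{\bS_r \bS_r}](\bHhat_{\bS_r \bS_r})^{-1}$ combined with the $\sqrt{|\bS_r|}$ norm conversion, the Frobenius bound at threshold $\delta C^2/(2\sqrt{|\bS_r|})$, and the event $\eval_{\min}(\bHhat_{\bS_r \bS_r}) \geq C/2$ from Lemma~\ref{lemma:sample_positive_definiteness}. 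The minor differences (writing the resolvent identity with the inverses in the opposite order, and inserting a Cauchy--Schwarz step in \eqref{eq:Teq1}) do not change the argument or the resulting constants.
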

	\begin{proof}
		Note that,
		\begin{align*}
		\begin{split}
		[\bHhat_{\bS_r^c \bS_r} - \bH_{\bS_r^c \bS_r}]_{jk} &= Z_{jk} \\
		&= \frac{1}{N} \sum_{l=1}^N (\encode(\bX_{\bar r})_j^l \encode(\bX_{\bar r})_k^l - \E_{\mathbb{X}}[\encode(X_{\bar r})_j \encode(X_{\bar r})_k] ) \\
		&= \frac{1}{N}\sum_{l=1}^N Z_{jk}^l
		\end{split}
		\end{align*}
		Let $\Prev(i)$ be the last index before block corresponding to  variable $i \in (\bpi_r \cup \bc_r)^c$ starts. Now,
		\begin{align*}
		||| \bHhat_{\bS_r^c \bS_r} - \bH_{\bS_r^c \bS_r} |||_{\tB, \infty, 1} = \max_{i \in (\bpi_r \cup \bc_r)^c} (\sum_{j = \Prev(i) + 1}^{\Prev(i) + m_i - 1} \sum_{k \in [|\nbr|] } |Z_{jk}|)
		\end{align*}
		Using Hoeffding inequality, we get
		\begin{align*}
		&\prob[|Z_{jk}| \geq \epsilon] \leq 2 \exp(\frac{-\epsilon^2 N}{8}) 
		\end{align*}
		Taking $\epsilon = \frac{\delta}{\rho_i \rho_{\bpi_r \cup \bc_r}} $ for any $i \in (\bpi_r \cup \bc_r)^c$.
		\begin{align*}
		&\prob[|Z_{jk}| \geq \frac{\delta}{\rho_i \rho_{\bpi_r \cup \bc_r}} ] \leq 2 \exp(\frac{-\delta^2 N}{8 (\rho_i \rho_{\bpi_r \cup \bc_r})^2}) \leq 2 \exp(\frac{-\delta^2 N}{8 (\bar{\rho} \rho_{\bpi_r \cup \bc_r} )^2})
		\end{align*}
		Using the union bound over $i \in (\bpi_r \cup \bc_r)^c$ we can write,
		\begin{align*}
		\begin{split}
		&\prob[||| \bHhat_{\bS_r^c \bS_r} - \bH_{\bS_r^c \bS_r} |||_{\tB, \infty, 1} \geq \delta ] \leq \sum_{i\in (\bpi_r \cup \bc_r)^c} \prob[\sum_{j = \Prev(i) + 1}^{\Prev(i) + m_i - 1} \sum_{k \in [|\nbr|] } |Z_{jk}| \geq \delta] \\
		&\leq \sum_{i\in (\bpi_r \cup \bc_r)^c} \prob[\exists j,k | |Z_{jk}| \geq \frac{\delta}{\rho_i  \rho_{\bpi_r \cup \bc_r}}] 
		\\
		&\leq \sum_{i\in (\bpi_r \cup \bc_r)^c} \rho_i  \rho_{\bpi_r \cup \bc_r}  \prob[|Z_{jk}| \geq \frac{\delta}{\rho_i   \rho_{\bpi_r \cup \bc_r}}] \\
		&\leq \sum_{i\in (\bpi_r \cup \bc_r)^c} \rho_i  \rho_{\bpi_r \cup \bc_r}  2 \exp(\frac{-\delta^2 N}{8 (\rho_i  \rho_{\bpi_r \cup \bc_r})^2}) \\
		&\leq 2 \exp(\frac{-\delta^2 N}{8 (\bar{\rho}  \rho_{\bpi_r \cup \bc_r})^2} + \log \rho_{(\bpi_r \cup \bc_r)^c}  + \log \rho_{\bpi_r \cup \bc_r} )
		\end{split}
		\end{align*}
		Similarly we can prove equation~\eqref{eq:Teq2},
		\begin{align*}
		&\prob[||| \bHhat_{\bS_r \bS_r} - \bH_{\bS_r \bS_r} |||_{\infty, \infty} \geq \delta ] 
		\leq  \rho_{\bpi_r \cup \bc_r} \prob[\sum_{k \in \bS_r} |Z_{jk}| \geq \delta  ] \\
		&\leq \rho_{\bpi_r \cup \bc_r}^2 \prob[|Z_{jk}| \geq \frac{\delta}{\rho_{\bpi_r \cup \bc_r}}  ] \\
		&\leq  2 \exp(\frac{-\delta^2 N}{8 \rho_{\bpi_r \cup \bc_r}^2  } + 2 \log \rho_{\bpi_r \cup \bc_r})
		\end{align*}
		Now we will prove equation~\eqref{eq:Teq3}. Note that,
		\begin{align*}
		\begin{split}
		&|||(\bHhat_{\bS_r \bS_r})^{-1} - (\bH_{\bS_r \bS_r})^{-1}|||_{\infty, \infty} = |||(\bH_{\bS_r \bS_r})^{-1} [\bH_{\bS_r \bS_r}
		- \bHhat_{\bS_r \bS_r}] (\bHhat_{\bS_r \bS_r})^{-1}|||_{\infty, \infty} \\
		&\leq \sqrt{|\bS_r|} |||(\bH_{\bS_r \bS_r})^{-1} [\bH_{\bS_r \bS_r} - \bHhat_{\bS_r \bS_r}] (\bHhat_{\bS_r \bS_r})^{-1}|||_{2,2} \\
		&\leq \sqrt{|\bS_r|} |||(\bH_{\bS_r \bS_r})^{-1}|||_{2,2} |||[\bH_{\bS_r \bS_r} - \bHhat_{\bS_r \bS_r}]|||_{2,2} |||(\bHhat_{\bS_r \bS_r})^{-1}|||_{2,2} \\
		&\leq \frac{\sqrt{|\bS_r|}}{C}|||[\bH_{\bS_r \bS_r} - \bHhat_{\bS_r \bS_r}]|||_{2,2} |||(\bHhat_{\bS_r \bS_r})^{-1}|||_{2,2}
		\end{split}
		\end{align*}
		Recall that we proved in equation~\eqref{eq:min_eigval_sample} that  $\prob[\eval_{\min}(\bHhat_{\bS_r \bS_r}) \geq C -
		\delta] \geq 1 - 2 \exp(- \frac{\delta^2 N }{8 \rho_{\bpi_r \cup \bc_r}^2} + 2 \log \rho_{\bpi_r \cup \bc_r}) $. Taking $\delta = \frac{C}{2}$, we get
		$\prob[\eval_{\min}(\bHhat_{\bS_r \bS_r}) \geq \frac{C}{2} ] \geq 1 - 2 \exp(- \frac{C^2 N }{32 \rho_{\bpi_r \cup \bc_r}^2} + 2 \log \rho_{\bpi_r \cup \bc_r}) $.
		This means that,
		\begin{align}
		\label{eq:inverse_eigval}
		\begin{split}
		\prob[|||(\bHhat_{\bS_r \bS_r})^{-1}|||_{2,2} \leq \frac{2}{C} ] &\geq 1 - 2 \exp(- \frac{C^2 N }{32 \rho_{\bpi_r \cup \bc_r}^2} + 2 \log \rho_{\bpi_r \cup \bc_r}) \ .
		\end{split}
		\end{align}
		Furthermore, from equation~\eqref{eq:sample_pop_frobeneus} we have:
		\begin{align*}
		\begin{split}
		\prob[||| \bH_{\bS_r \bS_r} - \bHhat_{\bS_r \bS_r} |||_{2,2} &\geq \epsilon] \leq 2 \exp(- \frac{\epsilon^2 N }{8 \rho_{\bpi_r \cup \bc_r}^2} + 2 \log \rho_{\bpi_r \cup \bc_r}) \ 
		\end{split}
		\end{align*}
		Taking $\epsilon = \delta \frac{C^2}{2\sqrt{|\bS_r|}}$, we get:
		\begin{align*}
		\begin{split}
		\prob[||| \bH_{\bS_r \bS_r} - \bHhat_{\bS_r \bS_r} |||_{2,2} &\geq \delta \frac{C^2}{2\sqrt{|\bS_r|}}] \leq 2 \exp(- \frac{\delta^2 C^4 N }{32\rho_{\bpi_r \cup \bc_r}^3} + 2 \log \rho_{\bpi_r \cup \bc_r}) 
		\end{split}
		\end{align*}
		\noindent It follows that,
		\begin{align*}
		\begin{split}
		&\prob[	|||(\bHhat_{\bS_r \bS_r})^{-1} - (\bH_{\bS_r \bS_r})^{-1}|||_{\infty, \infty} \leq \delta] \geq 1 -  2 \exp(- \frac{\delta^2 C^4 N }{32 \rho_{\bpi_r \cup \bc_r}^3} + 2 \log\rho_{\bpi_r \cup \bc_r} - 2 \exp(- \frac{C^2 N }{32 \rho_{\bpi_r \cup \bc_r}^2} + 2 \log \rho_{\bpi_r \cup \bc_r})
		\end{split}
		\end{align*}
	\end{proof}
	{\bf Controlling the first term of equation~\eqref{eq:sample mutual incoherence}.} 
	
	We can write $T_1$ as,
	\begin{align*}
	&T_1 = - \bH_{\bS_r^c \bS_r} (\bH_{\bS_r \bS_r})^{-1}[\bHhat_{\bS_r \bS_r} - \bH_{\bS_r \bS_r}](\bHhat_{\bS_r \bS_r})^{-1} 
	\end{align*}
	\noindent then,
	\begin{align*}
	\begin{split}
	|||T_1|||_{\tB, \infty, 1} &= ||| \bH_{\bS_r^c \bS_r} (\bH_{\bS_r \bS_r})^{-1}[\bHhat_{\bS_r \bS_r} - \bH_{\bS_r \bS_r}](\bHhat_{\bS_r \bS_r})^{-1} |||_{\tB, \infty, 1} \\
	&\leq ||| \bH_{\bS_r^c \bS_r} (\bH_{\bS_r \bS_r})^{-1} |||_{\tB, \infty, 1} |||[\bHhat_{\bS_r \bS_r} - \bH_{\bS_r \bS_r}]|||_{\infty, \infty} |||(\bHhat_{\bS_r \bS_r})^{-1} |||_{\infty, \infty} \\
	&\leq (1 - \alpha) |||[\bHhat_{\bS_r \bS_r} - \bH_{\bS_r \bS_r}]|||_{\infty, \infty} \sqrt{|\bS_r|}|||(\bHhat_{\bS_r \bS_r})^{-1} |||_{2, 2}
	\end{split}
	\end{align*}
	The first inequality follows using norm inequalities from Section \ref{sec:norm inequalities}.  Now using equation~\eqref{eq:inverse_eigval} and equation~\eqref{eq:Teq2} with $\delta = \frac{\alpha C}{12\sqrt{|\bS_r|}(1 - \alpha)}$ we can say that,
	\begin{align*}
	\begin{split}
	&\prob[|||T_1|||_{\tB, \infty, 1} \leq \frac{\alpha}{6} ] \geq  1 -  2 \exp(- \frac{C^2 N }{32 \rho_{\bpi_r \cup \bc_r}^2} + 2 \log \rho_{\bpi_r \cup \bc_r})   - 2 \exp(-K\frac{N\alpha^2C^2}{144 (1-\alpha)^2\rho_{\bpi_r \cup \bc_r}^3} + 2 \log \rho_{\bpi_r \cup \bc_r})  
	\end{split}
	\end{align*}
	
	{\bf Controlling the second term of equation~\eqref{eq:sample mutual incoherence}.}
	
	We can write $|||T_2|||_{\tB, \infty, 1}$ as,
	\begin{align*}
	\begin{split}
	|||T_2|||_{\tB, \infty, 1} &= |||  [\bHhat_{\bS_r^c \bS_r} - \bH_{\bS_r^c \bS_r}]\bH_{\bS_r \bS_r}^{-1} |||_{\tB, \infty, 1} \\
	&\leq |||  [\bHhat_{\bS_r^c \bS_r} - \bH_{\bS_r^c \bS_r}] |||_{\tB, \infty, 1} |||\bH_{\bS_r \bS_r}^{-1} |||_{\infty, \infty}\\
	&\leq |||  [\bHhat_{\bS_r^c \bS_r} - \bH_{\bS_r^c \bS_r}] |||_{\tB, \infty, 1} \sqrt{|\bS_r|}|||\bH_{\bS_r \bS_r}^{-1} |||_{2,2}\\
	&\leq \frac{\sqrt{|\bS_r|}}{C}|||  [\bHhat_{\bS_r^c \bS_r} - \bH_{\bS_r^c \bS_r}] |||_{\tB, \infty, 1} 
	\end{split}
	\end{align*}
	Again we use norm inequalities from \eqref{sec:norm inequalities} in first inequality. Using equation~\eqref{eq:Teq1} with $\delta = \frac{\alpha C}{6 \sqrt{|\bS_r|}}$ we get,
	\begin{align*}
	\begin{split} 
	&\prob[|||T_2|||_{\tB, \infty, 1} \leq \frac{\alpha}{6} ] \geq  1 - 2 \exp(\frac{-\alpha^2 C^2 N}{48 \bar{\rho}^2  \rho_{\bpi_r \cup \bc_r}^3} +  \log \rho_{(\bpi_r \cup \bc_r)^c} + \log \rho_{\bpi_r \cup \bc_r} )
	\end{split}
	\end{align*}
	
	{\bf Controlling the third term of equation~\eqref{eq:sample mutual incoherence}.} 
	
	We can write $|||T_3|||_{\tB, \infty, 1}$ as, 
	\begin{align*}
	\begin{split}
	&|||T_3|||_{\tB, \infty, 1} \leq 
	|||[\bHhat_{\bS_r^c \bS_r} - \bH_{\bS_r^c \bS_r}] |||_{\tB, \infty, 1} |||[(\bHhat_{\bS_r \bS_r})^{-1} - \bH_{\bS_r \bS_r}^{-1} ]|||_{\infty, \infty} 
	\end{split}
	\end{align*}
	\noindent Using equation~\eqref{eq:Teq1} and~\eqref{eq:Teq3} both with $\delta = \sqrt{\frac{\alpha}{6}}$, we get
	\begin{align*}
	\begin{split}
	&\prob[|||T_3|||_{\tB, \infty, 1} \leq \frac{\alpha}{6} ] \geq  1 -  2 \exp(- \frac{\delta^2 C^4 N }{32 \rho_{\bpi_r \cup \bc_r}^3} + 2 \log\rho_{\bpi_r \cup \bc_r}) - 2 \exp(- \frac{C^2 N }{32 \rho_{\bpi_r \cup \bc_r}^2} + 2 \log \rho_{\bpi_r \cup \bc_r}) \\
	&- 2 \exp(\frac{-\alpha N}{48 (\bar{\rho} \rho_{\bpi_r \cup \bc_r})^2} + \log \rho_{(\bpi_r \cup \bc_r)^c} + \log \rho_{\bpi_r \cup \bc_r} )
	\end{split}
	\end{align*}
	\noindent Putting everything together we get,
	\begin{align*}
	\begin{split}
	&\prob[||| \bHhat_{\bS_r^c \bS_r}\bHhat_{\bS_r \bS_r}^{-1} |||_{\tB, \infty, 1} \leq 1 - \frac{\alpha}{2}] \geq 1 - O( \exp(\frac{-K N}{ \bar{\rho}^2  \rho_{\bpi_r \cup \bc_r}^3} + \log \rho_{(\bpi_r \cup \bc_r)^c} + \log \rho_{\bpi_r \cup \bc_r} ))
	\end{split}
	\end{align*}
	which approaches $1$ as long as we have $N > \bar{\rho}^2  \rho_{\bpi_r \cup \bc_r}^3 \log \rho_{[n]}$
\end{proof}

\section{Discussion on Illustrative Example}
\label{sec:discussion}

For the binary Bayesian network shown in Figure \ref{fig:small bn}, we can explicitly derive expressions for mutual incoherence. We define a symmetric matrix $\bM \in \real^{4\times 4}$ such that $\bM_{ij} = \E[\encode(X_i) \encode(X_j)], \forall i,j \in \{1,2,3,4\}$. Note that $\bM_{ii} = 1, \forall i \in \{1,2,3,4\}$ as each $\encode(X_i) \in \{-1,1\}$. For ease of notation, let $\E[\encode(X_1) \encode(X_2)] = \E[\encode(X_2) \encode(X_4)] = \E[\encode(X_3) \encode(X_4)] = p$ and $\E[\encode(X_1) \encode(X_4)] = q$. Assuming that $\E[\encode(X_1)] = \E[\encode(X_3)] = 0$ and using independence properties of Bayesian networks, we can write $\bM$ as,
\begin{align*}
\bM = \begin{bmatrix}
1 & p & 0 & q \\
p & 1 & 0 & p \\
0 & 0 & 1 & p \\
q & p & p & 1
\end{bmatrix}
\end{align*} 

\subsection{On mutual incoherence (Assumption \ref{assum:mutual_incoherence})}
\label{subsec:discussion on assumption}

We remind the readers that in case of binary variables, Assumption \ref{assum:mutual_incoherence} reduces to $||| \bH_{\bS_r^c\bS_r}\bH_{\bS_r \bS_r}^{-1}|||_{\infty, \infty} < 1 - \alpha$ for some $\alpha \in (0, 1]$. Now, we will derive the necessary conditions to satisfy  Assumption \ref{assum:mutual_incoherence} for each node.

\paragraph{For node $1$: }
We have $\bS_1 = \{2\}$ and $\bS_1^c = \{3,4\}$. Assumption \ref{assum:mutual_incoherence} implies,
\begin{align*}
\begin{split}
||| \begin{bmatrix} \bM_{32} \\ \bM_{42} \end{bmatrix} \bM_{22}^{-1} |||_{\infty, \infty} &< 1 \\
||| \begin{bmatrix} 0 \\ p \end{bmatrix} 1 |||_{\infty, \infty} &< 1 \\
| p | &< 1
\end{split}  
\end{align*}
\paragraph{For node $2$: }
We have $\bS_2 = \{1,4\}$ and $\bS_2^c = \{3\}$. Assumption \ref{assum:mutual_incoherence} implies,
\begin{align*}
\begin{split}
||| \begin{bmatrix} \bM_{31} & \bM_{34} \end{bmatrix} \begin{bmatrix} \bM_{11} & \bM_{14} \\ \bM_{41} & \bM_{44} \end{bmatrix}^{-1} |||_{\infty, \infty} &< 1 \\
||| \begin{bmatrix} 0 & p \end{bmatrix} \begin{bmatrix} 1 & q \\ q & 1 \end{bmatrix}^{-1} |||_{\infty, \infty} &< 1 \\
||| \begin{bmatrix} \frac{-pq}{1 - q^2} & \frac{p}{1-q^2} \end{bmatrix} |||_{\infty, \infty} &< 1 \\
|p| + |q| &< 1
\end{split}  
\end{align*}
\paragraph{For node $3$: }
We have $\bS_3 = \{4\}$ and $\bS_3^c = \{1,2\}$. Assumption \ref{assum:mutual_incoherence} implies,
\begin{align*}
\begin{split}
||| \begin{bmatrix} \bM_{14} \\ \bM_{24} \end{bmatrix} \bM_{44}^{-1} |||_{\infty, \infty} &< 1 \\
||| \begin{bmatrix} q \\ p \end{bmatrix} 1 |||_{\infty, \infty} &< 1 \\
\max (|q|, | p |) &< 1
\end{split} 
\end{align*}
\paragraph{For node $4$: }
We have $\bS_4 = \{2,3\}$ and $\bS_4^c = \{1\}$. Assumption \ref{assum:mutual_incoherence} implies,
\begin{align*}
\begin{split}
||| \begin{bmatrix} \bM_{12} & \bM_{13} \end{bmatrix} \begin{bmatrix} \bM_{22} & \bM_{23} \\ \bM_{32} & \bM_{33} \end{bmatrix}^{-1} |||_{\infty, \infty} &< 1 \\
||| \begin{bmatrix} p & 0 \end{bmatrix} \begin{bmatrix} 1 & 0 \\ 0 & 1 \end{bmatrix}^{-1} |||_{\infty, \infty} &< 1 \\
|p| &< 1
\end{split}  
\end{align*}
Note that conditions for node $1, 3$ and $4$ are already satisfied using our assumptions. Thus we obtain the nontrivial condition that $|\E[\encode(X_1) \encode(X_4)] | + |\E[\encode(X_3) \encode(X_4)]|  < 1$. 

\subsection{On $\bW^*$ in Theorem \ref{theorem:main_result}}
\label{subsec:on wstarmin}

We can compute analytical expression for $\bW_{\nbr.}^*$ for the example binary Bayesian network from Figure \ref{fig:small bn} by using the formula $\bW^*_{\nbr .} = \E[\encode(X_{\bS_r})\encode(X_{\bS_r})^\T]^{-1}\E[\encode(X_{\bS_r})\encode(X_r)^\T]$ and then verify that all its entries are sufficiently away from zero. Note that for the binary variables $\|\f(\bW_i^*)\|_2$ is simply $|\bW_i^*|$.

\paragraph{For node $1$: }
We have $\bS_1 = \{2\}$. 
\begin{align*}
\begin{split}
\bW^*_{\bS_1.} &= \bM_{22}^{-1} \bM_{12} \\
&= 1 \times p \\ 
&= p
\end{split}  
\end{align*}
\paragraph{For node $2$: }
We have $\bS_2 = \{1, 4\}$.
\begin{align*}
\begin{split}
\bW^*_{\bS_2.} &= \begin{bmatrix} \bM_{11} & \bM_{14} \\ \bM_{41} & \bM_{44} \end{bmatrix}^{-1} \begin{bmatrix} \bM_{12} \\ \bM_{42} \end{bmatrix} \\
&= \begin{bmatrix} 1 & q \\ q & 1 \end{bmatrix}^{-1} \begin{bmatrix} p \\ p \end{bmatrix} \\ 
&= \begin{bmatrix} \frac{p}{1 + q} \\ \frac{p}{1 + q} \end{bmatrix}
\end{split}  
\end{align*}
\paragraph{For node $3$: }
We have $\bS_3 = \{ 4\}$. 
\begin{align*}
\begin{split}
\bW^*_{\bS_3.} &= \bM_{44}^{-1} \bM_{43} \\
&= 1 \times p \\ 
&= p
\end{split}  
\end{align*}
\paragraph{For node $4$: }
We have $\bS_4 = \{2,3\}$.
\begin{align*}
\begin{split}
\bW^*_{\bS_4.} &= \begin{bmatrix} \bM_{22} & \bM_{23} \\ \bM_{32} & \bM_{33} \end{bmatrix}^{-1} \begin{bmatrix} \bM_{24} \\ \bM_{34} \end{bmatrix} \\
&= \begin{bmatrix} 1 & 0 \\ 0 & 1 \end{bmatrix}^{-1} \begin{bmatrix} p \\ p \end{bmatrix} \\ 
&= \begin{bmatrix} p \\ p \end{bmatrix}
\end{split}  
\end{align*}

Clearly, none of the $\bW_{\bS_r.}^*$ contains any zero entry. Thus the third statement of Theorem \ref{theorem:main_result} holds as long as  $\min(|p|, |\frac{p}{1 + q}|)$ is sufficiently away from zero.

\section{Norm Inequalities}
\label{sec:norm inequalities}
Here we will derive some norm inequalities which we will use in our proofs.
\begin{lemma}[Norm Inequalities]
	Let $\bA$ be a row partitioned block matrix which consists of $p$ blocks where block $\bA_i \in \real^{m_i \times n}, \ \forall i \in [p]$ and $\bB \in \real^{n \times o}$. Then the following inequalities hold:
	\begin{align*}
	\|\bA \bB \|_{\tB, \infty, 2} \leq  \| \bA\|_{\tB, \infty, 1} \| \bB\|_{\infty,2}
	\end{align*} 
	\begin{align*}
	\|\bA \bB \|_{\tB, \infty, 1} \leq  \| \bA\|_{\tB, \infty, 1} \| \bB\|_{\infty,\infty}
	\end{align*} 
\end{lemma}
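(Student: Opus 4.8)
The plan is to exploit the fact that the product $\bA\bB$ is again a row partitioned block matrix whose $i$-th block is exactly $\bA_i\bB \in \real^{m_i \times o}$. By the definition of the block norm this gives $\|\bA\bB\|_{\tB,\infty,2} = \max_{i\in[p]}\|\f(\bA_i\bB)\|_2$ and $\|\bA\bB\|_{\tB,\infty,1} = \max_{i\in[p]}\|\f(\bA_i\bB)\|_1$, while the common factor on the right-hand sides is $\max_{i\in[p]}\|\f(\bA_i)\|_1 = \|\bA\|_{\tB,\infty,1}$. Since the maximum distributes over a product, it suffices to prove, for each fixed block index $i$, the per-block bounds $\|\f(\bA_i\bB)\|_2 \le \|\f(\bA_i)\|_1\,\|\bB\|_{\infty,2}$ and $\|\f(\bA_i\bB)\|_1 \le \|\f(\bA_i)\|_1\,\|\bB\|_{\infty,\infty}$, and then take the maximum over $i$ in each case.

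Next I would rewrite the flattened-vector norms in terms of ordinary matrix quantities: $\|\f(\bA_i\bB)\|_2 = \|\bA_i\bB\|_F$, while $\|\f(\bA_i\bB)\|_1 = \sum_{k,l}|(\bA_i\bB)_{kl}|$ and $\|\f(\bA_i)\|_1 = \sum_{k,s}|(\bA_i)_{ks}|$ are the entrywise $\ell_1$ norms. The workhorse observation is that row $k$ of $\bA_i\bB$ is the linear combination $\sum_s (\bA_i)_{ks}\,\bB_{s\cdot}$ of the rows $\bB_{s\cdot}$ of $\bB$. For the second (entrywise $\ell_1$) inequality, applying the triangle inequality entrywise and regrouping the double sum yields $\sum_{k,l}|(\bA_i\bB)_{kl}| \le \sum_{k,s}|(\bA_i)_{ks}|\sum_l|\bB_{sl}|$; bounding the inner sum $\sum_l|\bB_{sl}|$ by the maximum absolute row sum $\|\bB\|_{\infty,\infty}$ (the $(\infty,\infty)$-operator norm) and factoring it out of the sum gives the claim.

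For the first inequality I would work one row at a time in the $\ell_2$ norm. Using the row-combination identity together with the triangle inequality and then factoring out the scalar coefficients, $\|(\bA_i\bB)_{k\cdot}\|_2 = \big\|\sum_s (\bA_i)_{ks}\bB_{s\cdot}\big\|_2 \le \sum_s|(\bA_i)_{ks}|\,\|\bB_{s\cdot}\|_2 \le \|(\bA_i)_{k\cdot}\|_1\,\|\bB\|_{\infty,2}$, where $\|\bB\|_{\infty,2} = \max_s\|\bB_{s\cdot}\|_2$ is the largest row $\ell_2$ norm (the $(\infty,2)$-operator norm). Summing the squares over rows $k$ then gives $\|\bA_i\bB\|_F^2 \le \|\bB\|_{\infty,2}^2\sum_k\big(\|(\bA_i)_{k\cdot}\|_1\big)^2$.

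The only genuinely nontrivial point, and the one I would flag as the main obstacle, is converting this sum of squared row $\ell_1$ norms back into the square of the total $\ell_1$ norm so that the factor $\|\f(\bA_i)\|_1$ appears. This rests on the elementary inequality $\sum_k a_k^2 \le \big(\sum_k a_k\big)^2$ valid for nonnegative reals $a_k = \|(\bA_i)_{k\cdot}\|_1$, which yields $\sum_k\big(\|(\bA_i)_{k\cdot}\|_1\big)^2 \le \big(\sum_k\|(\bA_i)_{k\cdot}\|_1\big)^2 = \big(\|\f(\bA_i)\|_1\big)^2$. Taking square roots completes the per-block $\ell_2$ bound, and a final maximum over $i\in[p]$ in both computations delivers the two stated inequalities.
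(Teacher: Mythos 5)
Your proof is correct, and it takes a genuinely different route from the paper's. You argue directly: you identify the $i$-th block of $\bA\bB$ as $\bA_i\bB$, expand each of its rows as $\sum_s(\bA_i)_{ks}\bB_{s\cdot}$, apply the triangle inequality, and close the $\ell_2$ case with the elementary bound $\sum_k a_k^2\le(\sum_k a_k)^2$ for nonnegative $a_k$ --- correctly flagged as the one step that is not purely mechanical. The paper instead uses a variational (dual-norm) characterization: it writes $\|\f((\bA\bB)_i)\|_b=\max_{\|\f(\bY_i)\|_{b^*}\le 1}\f((\bA\bB)_i)^\T\f(\bY_i)$, expands the pairing row by row, and applies H\"older's inequality $|(\bA_i)_{k\cdot}\bB(\bY_i)_{k\cdot}|\le\|(\bA_i)_{k\cdot}\|_1\|\bB(\bY_i)_{k\cdot}\|_\infty$ before invoking the operator-norm definition of $\|\bB\|_{\infty,2}$ (resp.\ $\|\bB\|_{\infty,\infty}$). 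The two arguments buy slightly different things: the paper's dual formulation treats both inequalities uniformly (only the constraint set on $\bY_i$ changes) and generalizes naturally to other $(\tB,\infty,b)$ norms, whereas your direct computation is more elementary, avoids the notational care needed to pair $\f((\bA\bB)_i)$ with $\f(\bY_i)$ row by row, and makes explicit where the (potentially loose) factor $\|\f(\bA_i)\|_1$ rather than, say, $\max_k\|(\bA_i)_{k\cdot}\|_1$ enters. Both establish exactly the stated inequalities.
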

\begin{proof}
	Let $\bY$ be a row partitioned block matrix with same size and block structure as $\bA$. Using definitions from Subsection \ref{subsec:notation}:
	\begin{align*}
	\begin{split}
	\|\bA \bB\|_{\tB, \infty, 2} &= \max_{i \in [p]} \|\f((\bA\bB)_i)\|_2 \\
	&= \max_{i \in [p], \| \f(\bY_i) \|_2 \leq 1} \f((\bA\bB)_i)^\T \f(\bY_i) \\
	&= \max_{i \in [p], \| \f(\bY_i) \|_2 \leq 1} [(\bA_i)_{1.}\bB \dots (\bA_i)_{m_i.}\bB] \f(\bY_i) \\
	&= \max_{i \in [p], \| \f(\bY_i) \|_2 \leq 1} [(\bA_i)_{1.}\bB (\bY_i)_{1.} + \dots + (\bA_i)_{m_i.}\bB (\bY_i)_{m_i.}] \\
	&\leq  \max_{\substack{i \in [p],\\ \| \f(\bY_i) \|_2 \leq 1}} \| (\bA_i)_{1.} \|_1 \| \bB (\bY_i)_{1.} \|_{\infty} + \dots +\| (\bA_i)_{m_i.} \|_1 \| \bB (\bY_i)_{m_i.} \|_{\infty} \\
	&\leq \| \bA\|_{\tB, \infty, 1} \| \bB\|_{\infty,2}
	\end{split}
	\end{align*} 
	
	We follow a similar procedure for the last norm inequality.
	\begin{align*}
	\begin{split}
	\|\bA \bB\|_{\tB, \infty, 1} &= \max_{i \in [p]} \|\f((\bA\bB)_i)\|_1 \\
	&= \max_{i \in [p], \| \f(\bY_i) \|_\infty \leq 1} \f((\bA\bB)_i)^\T \f(\bY_i) \\
	&= \max_{i \in [p], \| \f(\bY_i) \|_\infty \leq 1} [(\bA_i)_{1.}\bB \dots (\bA_i)_{m_i.}\bB] \f(\bY_i) \\
	&= \max_{i \in [p], \| \f(\bY_i) \|_\infty \leq 1} [(\bA_i)_{1.}\bB (\bY_i)_{1.} + \dots + (\bA_i)_{m_i.}\bB (\bY_i)_{m_i.}] \\
	&\leq  \max_{i \in [p], \| \f(\bY_i) \|_\infty \leq 1} \| (\bA_i)_{1.} \|_1 \| \bB (\bY_i)_{1.} \|_{\infty} + \dots \| (\bA_i)_{m_i.} \|_1 \| \bB (\bY_i)_{m_i.} \|_{\infty} \\
	&\leq \| \bA\|_{\tB, \infty, 1} \| \bB\|_{\infty,\infty}
	\end{split}
	\end{align*} 	
\end{proof}

\section{Proof of Theorem \ref{theorem:main_result}}
\label{subsec:proof of theorem 1}

In this section, we provide the primal dual construction for the proof of Theorem \ref{theorem:main_result}. Let us consider the block $l_{12}$-norm of $\bW$.
\begin{align}
\label{eq:dual of W}
\begin{split}
\| \bW \|_{\tB, 1, 2}  &=  \sum_{i=1, i\ne r}^n  \|\f(\bW_i) \|_2 \\
&= \sum_{i=1, i\ne r}^n \sup_{\|\f(\bZ_i)\|_2 \leq 1} \f(\bZ_i)^T \f(\bW_i) 
\end{split}
\end{align}
where $\bZ_i$ is a matrix of same size as $\bW_i, \ \forall i \in [n] \wedge i \ne r $. We can think of a row partitioned block matrix $\bZ$ which contains $\bZ_i$ as the row blocks. We can simplify equation~\eqref{eq:dual of W} in following way,
\begin{align}
\label{eq:dual of W 1}
\| \bW \|_{\tB, 1, 2}  = \sum_{i=1, i\ne r}^n \f(\bZ_i)^\T \f(\bW_i) 
\end{align} 
where $\bZ \in \calZ$ and $\calZ$ is defined as follows:
\begin{align}
\label{eq:calZ}
\calZ = \big\{ \bZ \ | \ \f(\bZ_i)  = \begin{cases}
\frac{\f(\bW_i)}{\|\f(\bW_i)\|_2}, \text{when }   \f(\bW_i)  \ne \bzero \\
\f(\bZ_i),  \|\f(\bZ_i)\|_2 \leq 1, \text{otherwise}
\end{cases}, \forall  i \in [n] \wedge i \ne r \big\}
\end{align}
Using equation~\eqref{eq:dual of W 1}, we can rewrite the optimization problem in \eqref{eq:our_model} as follows:

\begin{align}
\label{eq:our model 1}
\begin{split}
\bWhat  = & \min_{\bW} \frac{1}{2N} \| \encode(\bXr) - \encode(\bXnr) \bW \|_F^2 + \lambdahat \sum_{i=1, i\ne r}^n \f(\bZ_i)^\T \f(\bW_i) \\
& \text{such that }  \bZ \in \calZ
\end{split}
\end{align}
At the optimum, the stationarity condition for the optimization problem \eqref{eq:our model 1} is given by:
\begin{align}
\label{eq:stationarity}
\nabla_{\bW}\big[\frac{1}{2N} \| \encode(\bXr) - \encode(\bXnr) \bW\|_F^2 \big] + \lambdahat \bZ = \bzero 
\end{align}
where $\bZ \in \calZ$. 

We use the optimality condition \eqref{eq:stationarity} to prove Theorem \ref{theorem:main_result}. The outline of the proof is as follows:
\begin{enumerate}
	\item First, we fix row blocks of $\bW$ matrix corresponding to non-neighbor nodes of node $r$  to be the zero matrix, i.e., $\bW_i = \bzero, \ \forall i \in (\bpi_r \cup \bc_r)^c$. Then we show that the solution to the optimization problem~\eqref{eq:our_model} is unique. 
	\item We show that $\| \f(\bZ_i)  \|_2 < 1, \forall i \in (\bpi_r \cup \bc_r)^c$ which suffices to justify our choice of $\bW$ in Step 1. 
	\item We prove that $\|\f(\bW_i)\|_2 > 0, \forall i \in \bpi_r \cup \bc_r$ as long as $\|\f(\bW_i^*)\|_2$ is sufficiently large. Requirement on $\|\f(\bW_i^*)\|_2$ is similar to the minimum weight requirement 
\end{enumerate}

We start the proof with the first statement.

\paragraph{Proof of the first statement of Theorem \ref{theorem:main_result}}To prove our first statement, we choose $\bWhat$ such that $\bWhat_{\nbr^c.} = \bzero$. We will show that the optimization problem~\eqref{eq:our_model} has a unique solution for this particular choice of $\bWhat$.
\begin{lemma} 
	\label{lemma:unique_solution}
	Let $\tilde \bW_{\nbr^c.} = \bzero$ for every solution $\tilde \bW$ of the optimization problem \eqref{eq:our_model} then $\bHhat_{\nbr \nbr} \succ 0$ implies that the optimization
	problem~\eqref{eq:our_model} restricted to $( \bW_{\nbr.}; \bzero)$ has a unique solution.
\end{lemma}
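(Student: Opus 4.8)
The plan is to exploit convexity of the objective together with the strict convexity forced by the positive-definite Hessian on the neighborhood block. First I would observe that $\hat f(\bW) = \hat\Loss(\bW) + \lambdahat\|\bW\|_{\tB,1,2}$ is convex, being the sum of the quadratic loss~\eqref{eq:lossN} and the block $\ell_{1,2}$ norm; hence its minimizer set is convex, and by the hypothesis of the lemma every minimizer $\tilde\bW$ satisfies $\tilde\bW_{\nbr^c.} = \bzero$. Consequently the solution set of~\eqref{eq:our_model} coincides with the solution set of the problem restricted to the subspace $\{\bW : \bW_{\nbr^c.} = \bzero\}$, so it suffices to show that this restricted problem admits at most one minimizer.

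Next I would restrict to that subspace, where the loss depends only on $\bW_{\nbr.}$ and equals $\frac{1}{2N}\|\encode(\bXr) - \encode(\bXnr)_{.\nbr}\bW_{\nbr.}\|_F^2$. Using the Hessian computation from Section~\ref{subsec:our model}, the Hessian of this restricted loss with respect to $\f(\bW_{\nbr.})$ is block diagonal with $\rho_r$ identical diagonal blocks, each equal to $\bHhat_{\nbr\nbr}$. Since the eigenvalues of a block-diagonal matrix are the union of the eigenvalues of its blocks, the hypothesis $\bHhat_{\nbr\nbr}\succ 0$ forces this Hessian to be positive definite, so the restricted loss is strictly convex in $\f(\bW_{\nbr.})$.

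Finally I would reintroduce the regularizer: on the subspace $\lambdahat\|\bW\|_{\tB,1,2}$ reduces to $\lambdahat\sum_{i\in\bpi_r\cup\bc_r}\|\f(\bW_i)\|_2$, which is convex. A strictly convex function plus a convex function is strictly convex, and a strictly convex function has at most one minimizer; existence is guaranteed either by coercivity of the $\ell_{1,2}$ term or directly by the standing hypothesis that solutions exist. Hence the restricted problem has exactly one solution, which is the claim.

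The main obstacle is logical bookkeeping rather than technical depth: I must argue that the global solution set genuinely coincides with the restricted solution set, so that uniqueness on the subspace transfers, and that the block-diagonal loss Hessian inherits positive definiteness purely from $\bHhat_{\nbr\nbr}$. The order matters, because strict convexity is \emph{not} available for the unrestricted problem---the block $\ell_{1,2}$ norm is not strictly convex and the unrestricted loss Hessian $\bHhat$ need not be positive definite. This is exactly why the assumption that every solution vanishes off $\nbr$ is indispensable and why it must be invoked before the strict-convexity argument is applied.
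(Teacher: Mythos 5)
Your proposal is correct and follows essentially the same route as the paper: restrict to the subspace where $\bW_{\nbr^c.}=\bzero$, note that the Hessian of the restricted loss is block diagonal with identical blocks $\bHhat_{\nbr\nbr}\succ 0$, conclude strict convexity of the restricted objective after adding the convex block $\ell_{1,2}$ penalty, and deduce uniqueness. Your additional bookkeeping about why the global and restricted solution sets coincide is a welcome clarification of a step the paper leaves implicit, but it does not change the substance of the argument.
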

(See \eqref{proof:unique_solution} for detailed proof.)

The proof uses convexity of the loss function and the fact that $\bHhat_{\nbr \nbr} \succ 0$ with high probability which is true due to Assumptions \ref{assum:positive_definiteness_assumption} ,\ref{lemma:sample_positive_definiteness}. 

\paragraph{Proof of the second statement of Theorem \ref{theorem:main_result}}To prove that the choice $\bWhat_i = \bzero, \forall i \in
(\bpi_r \cup \bc_r)^c$ is justified, we provide a primal-dual construction. We can rewrite the equation~\eqref{eq:stationarity} as:
\begin{align}
\label{eq:stationarity modified}
\nabla_\bW[\frac{1}{2N} \|\encode(\bXnr) \bW^* + \bE - \encode(\bXnr) \bW\|_F^2 ] + \lambdahat \bZ = \bzero 
\end{align}
where $\bZ \in \calZ$ and $\bE \in \real^{N \times m_r - 1}$ is defined as:
\begin{align*}
\bE = \begin{bmatrix}
\be_1^\T \\ \vdots \\ \be_N^\T
\end{bmatrix}
\end{align*} 
Simplifying equation~\eqref{eq:stationarity modified}, we get:
\begin{align}
\label{eq:stationarity simplified}
\begin{split}
&\nabla_\bW[\frac{1}{2N} \|\encode(\bXnr) (\bW - \bW^*) - \bE \|_F^2 ] + \lambdahat \bZ = \bzero \\
& \frac{1}{N} \encode(\bXnr)^\T \encode(\bXnr) (\bW - \bW^*) - \frac{1}{N}  \encode(\bXnr)^\T \bE + \lambdahat \bZ = \bzero\\
\end{split}
\end{align}
By substituting $\bWhat = (\bWhat_{\nbr.}; \bzero)$ in equation~\eqref{eq:stationarity simplified}  and letting $\bHhat = \frac{1}{N}\encode(\bXnr)^\T \encode(\bXnr)$, we can write the above equation in two parts:
\begin{align}
\label{eq:Zs}
\begin{split}
&\bHhat_{\nbr \nbr} (\bW_{\nbr.} - \bW_{\nbr.}^*) - \frac{1}{N}  \encode(\bXnr)_{\nbr.}^\T \bE + \lambdahat \bZ_{\nbr .} = \bzero \\
&\bW_{\nbr.} - \bW_{\nbr.}^* =  \frac{1}{N} \bHhat_{\nbr \nbr}^{-1}  \encode(\bXnr)_{\nbr.}^\T \bE - \lambdahat \bHhat_{\nbr \nbr}^{-1} \bZ_{\nbr .} 
\end{split}
\end{align}
and
\begin{align}
\label{eq:Zsc}
\bHhat_{\nbr^c \nbr} (\bW_{\nbr.} - \bW_{\nbr.}^*) - \frac{1}{N}  \encode(\bXnr)_{\nbr^c.}^\T \bE + \lambdahat \bZ_{\nbr^c .} = \bzero
\end{align}
Substituting $\bW_{\nbr.} - \bW_{\nbr.}^*$ from equation~\eqref{eq:Zs} to equation~\eqref{eq:Zsc}, we get
\begin{align*}
\lambdahat \bZ_{\nbr^c.} &= - \bHhat_{\nbr^c\nbr} \bHhat_{\nbr \nbr}^{-1}  \frac{1}{N} \encode(\bXnr)_{\nbr.}^\T \bE - \lambdahat \bHhat_{\nbr^c\nbr} \bHhat_{\nbr \nbr}^{-1} \bZ_{\nbr .}  + \frac{1}{N} \encode(\bXnr)_{\nbr^c.}^\T \bE 
\end{align*}
Now we will bound $\lambdahat\| \bZ_{\nbr^c.} \|_{\tB, \infty, 2}$ where blocks have size $\real^{m_i - 1 \times m_r - 1} \ \forall i \in (\bpi_r \cup \bc_r)^c $. 
\begin{align*}
\begin{split}
\lambdahat \| \bZ_{\nbr^c.} \|_{\tB, \infty, 2} &= \|  \bHhat_{\nbr^c\nbr} \bHhat_{\nbr \nbr}^{-1}  \frac{\encode(\bXnr)_{\nbr.}^\T \bE}{N} + \lambdahat \bHhat_{\nbr^c\nbr} \bHhat_{\nbr \nbr}^{-1} \bZ_{\nbr .}  - \frac{ \encode(\bXnr)_{\nbr^c.}^\T \bE}{N} \|_{\tB, \infty, 2} \\
&\leq  \| \bHhat_{\nbr^c\nbr} \bHhat_{\nbr \nbr}^{-1}  \frac{1}{N} \encode(\bXnr)_{\nbr.}^\T \bE \|_{\tB, \infty, 2} + \| \lambdahat \bHhat_{\nbr^c\nbr} \bHhat_{\nbr \nbr}^{-1} \bZ_{\nbr .} \|_{\tB, \infty, 2}  \\
&+ \| \frac{1}{N} \encode(\bXnr)_{\nbr^c.}^\T \bE \|_{\tB, \infty, 2} 
\end{split}
\end{align*}
We can further simplify the above equation by using norm inequalities from Section \ref{sec:norm inequalities}:
\begin{align*}
\begin{split}
\lambdahat \| \bZ_{\nbr^c.} \|_{\tB, \infty, 2} 
&\leq  \| \bHhat_{\nbr^c\nbr} \bHhat_{\nbr \nbr}^{-1} \|_{\tB, \infty, 1} \| \frac{1}{N} \encode(\bXnr)_{\nbr.}^\T \bE \|_{\infty, 2} + \\
& \lambdahat \|  \bHhat_{\nbr^c\nbr} \bHhat_{\nbr \nbr}^{-1} \|_{\tB, \infty, 1} \| \bZ_{\nbr .} \|_{\infty, 2}  + \| \frac{1}{N} \encode(\bXnr)_{\nbr^c.}^\T \bE \|_{\tB, \infty, 2} 
\end{split}
\end{align*}
Using Assumptions \ref{assum:mutual_incoherence} and \ref{lemma:mutual_incoherence}, we can write the above equation as:
\begin{align}
\label{eq:Zsc with mutual incoherence}
\begin{split}
\lambdahat \| \bZ_{\nbr^c.} \|_{\tB, \infty, 2} 
&\leq  (1 - \alpha) \| \frac{1}{N} \encode(\bXnr)_{\nbr.}^\T \bE \|_{\infty, 2} + \lambdahat (1 - \alpha)  + \| \frac{1}{N} \encode(\bXnr)_{\nbr^c.}^\T \bE \|_{\tB, \infty, 2} 
\end{split}
\end{align}
where we used the fact that $\| \bZ_{\nbr .} \|_{\tB, \infty, 2} \leq 1 \implies \| \bZ_{\nbr .} \|_{\infty, 2} \leq 1$. Now it only remains to bound $\| \frac{1}{N} \encode(\bXnr)_{\nbr.}^\T \bE \|_{\infty, 2} $ and $\| \frac{1}{N} \encode(\bXnr)_{\nbr^c.}^\T \bE \|_{\tB, \infty, 2} $ which we do in the next lemma.

\begin{lemma}
	\label{lem:bound some terms}
	If $\lambdahat$ satisfies equation~\eqref{eq:lambda_condition} then the following bounds hold with high probability:
	\begin{align*}
	\| \frac{1}{N} \encode(\bXnr)_{\nbr.}^\T \bE \|_{\infty, 2} \leq \frac{\alpha \lambdahat}{4(1-\alpha)}
	\end{align*}
	\begin{align*}
	\| \frac{1}{N} \encode(\bXnr)_{\nbr^c.}^\T \bE \|_{\tB, \infty, 2}  \leq \frac{\alpha \lambdahat}{4}
	\end{align*}
\end{lemma}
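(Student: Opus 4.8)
The plan is to reduce both inequalities to a single uniform entrywise bound on the matrix $M \triangleq \frac{1}{N}\encode(\bXnr)^\T \bE$, whose $(j,k)$ entry is $M_{jk} = \frac{1}{N}\sum_{l=1}^{N} \encode(\bX_{\bar r})_j^l \, \be_k^l$. For each fixed pair $(j,k)$ the summands $Z_{jk}^l \triangleq \encode(\bX_{\bar r})_j^l \be_k^l$ are i.i.d.\ across the $N$ samples (the samples are i.i.d.) and bounded, $|Z_{jk}^l| \leq 2\sigma$, since $\encode(\bX_{\bar r})_j^l \in \calB = \{-1,0,1\}$ and $\|\be\|_\infty \leq 2\sigma$ by \eqref{eq:sigma 1}. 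The delicate point is that $\be$ is \emph{not} independent of $\encode(X_{\bar r})$, so $M_{jk}$ need not be centered; I would therefore not attempt to exploit any orthogonality of the residual. Instead I would write $M_{jk} = \frac{1}{N}\sum_l (Z_{jk}^l - \E Z_{jk}^l) + \E[Z_{jk}^1]$, bound the systematic part crudely by $|\E[Z_{jk}^1]| \leq \E[|\be_k|] \leq \mu$ using \eqref{eq:mu 1}, and control the centered fluctuation with Hoeffding's inequality, $\prob[\,|\frac{1}{N}\sum_l (Z_{jk}^l - \E Z_{jk}^l)| \geq t\,] \leq 2\exp(-N t^2/(8\sigma^2))$. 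This gives a per-entry estimate $|M_{jk}| \leq \mu + t$ holding with high probability for each $(j,k)$.

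Next I would convert the two mixed matrix norms into this entrywise quantity, tracking the block structure. For the first inequality, $\|M_{\nbr.}\|_{\infty,2} = \max_{j\in\nbr}(\sum_{k=1}^{\rho_r} M_{jk}^2)^{1/2} \leq \sqrt{\rho_r}\,\max_{j\in\nbr,\,k}|M_{jk}|$, and a union bound over the $\rho_{\bpi_r \cup \bc_r}\,\rho_r$ entries in these rows produces the $\log(\rho_{\bpi_r \cup \bc_r}\rho_r)$ term. For the second inequality the rows of $M_{\nbr^c.}$ are grouped into blocks $M_i$ indexed by $i \in (\bpi_r \cup \bc_r)^c$, each of size at most $\bar{\rho}\times\rho_r$, so $\|M_{\nbr^c.}\|_{\tB,\infty,2} = \max_i \|\f(M_i)\|_2 \leq \sqrt{\bar{\rho}\rho_r}\,\max_{j\in\nbr^c,\,k}|M_{jk}|$, and the union bound now ranges over at most $|(\bpi_r \cup \bc_r)^c|\,\bar{\rho}\,\rho_r$ entries, giving the $\log(|(\bpi_r \cup \bc_r)^c|\bar{\rho}\rho_r)$ term. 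Combining with $|M_{jk}| \leq \mu + t$ and choosing $t \asymp \sqrt{2\sigma^2\log(\cdot)/N}$ yields, with high probability, $\|M_{\nbr.}\|_{\infty,2} \leq \sqrt{\rho_r/N}\,(\sqrt{2\sigma^2\log(\rho_{\bpi_r \cup \bc_r}\rho_r)}+\mu)$ and $\|M_{\nbr^c.}\|_{\tB,\infty,2} \leq \sqrt{\bar{\rho}\rho_r/N}\,(\sqrt{2\sigma^2\log(|(\bpi_r \cup \bc_r)^c|\bar{\rho}\rho_r)}+\mu)$.

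Finally I would verify that hypothesis \eqref{eq:lambda_condition} is tuned precisely so that the first right-hand side is at most $\frac{\alpha\lambdahat}{4(1-\alpha)}$ and the second at most $\frac{\alpha\lambdahat}{4}$: the $\max$ of the two arguments in \eqref{eq:lambda_condition}, carrying the common prefactor $\frac{4}{\alpha}\sqrt{\rho_r/N}$ together with the $(1-\alpha)$ versus $\sqrt{\bar{\rho}}$ weighting, is exactly what dominates these two targets after rearrangement. Expressing $t$ back in terms of $\lambdahat$, each union-bound failure probability is of order $\exp(-K N \lambdahat^2)$, consistent with the probability claimed in Theorem \ref{theorem:main_result}.

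The main obstacle is exactly the lack of independence between the encoded regressors $\encode(X_{\bar r})$ and the residual $\be$: because the substitute model is not the true data-generating process, no zero-mean noise argument is available, and the whole purpose of the parameter $\mu$ in \eqref{eq:mu 1} and in \eqref{eq:lambda_condition} is to absorb the resulting systematic bias $\E[Z_{jk}^1]$. Once that is handled, the remaining work---matching the dimensional factors $\sqrt{\rho_r}$ and $\sqrt{\bar{\rho}}$ and the cardinalities of the union bounds to the exact logarithmic terms of \eqref{eq:lambda_condition}, and keeping the Hoeffding constants consistent---is routine bookkeeping that must nonetheless be carried out carefully to land on the stated constants.
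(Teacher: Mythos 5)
Your proposal is correct and follows essentially the same route as the paper: an entrywise bound on $\frac{1}{N}\encode(\bXnr)^\T\bE$ via Hoeffding's inequality, a union bound over the relevant entries (giving exactly the $\log(\rho_{\bpi_r\cup\bc_r}\rho_r)$ and $\log(|(\bpi_r\cup\bc_r)^c|\bar\rho\rho_r)$ terms), the dimensional factors $\sqrt{\rho_r}$ and $\sqrt{\bar\rho\rho_r}$ from the block sizes, and the choice of $t$ matching \eqref{eq:lambda_condition}. The one divergence is in how the non-zero mean of the residual is absorbed: the paper bounds $\frac{1}{N}|\encode(\bXnr)_{i.}^\T\bE_{.j}|\leq\frac{1}{N}\sum_l|\bE_{jl}|$ and applies Hoeffding to the absolute errors (range $[0,2\sigma]$, mean at most $\mu$), yielding exponent $-Nt^2/(2\sigma^2)$, whereas your centering of the products $Z_{jk}^l\in[-2\sigma,2\sigma]$ yields the weaker exponent $-Nt^2/(8\sigma^2)$, so landing on the exact constant $\sqrt{2\sigma^2\log(\cdot)}$ in \eqref{eq:lambda_condition} requires the paper's cruder absolute-value bound --- a constant-factor issue only, which you correctly flag as bookkeeping.
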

(See Section \ref{sec:bounds some terms} for detailed proof.)

We can now substitute the above result in equation~\eqref{eq:Zsc with mutual incoherence} to get,
\begin{align*}
\lambdahat \| \bZ_{\nbr^c.} \|_{\tB, \infty, 2} \leq \lambdahat (1 - \frac{\alpha}{2}) < \lambdahat 
\end{align*}
The above result along with the definition of $\calZ$ in equation~\eqref{eq:calZ} implies that $\bWhat_{\nbr^c.} = \bzero$.

\paragraph{Proof of the third statement of Theorem \ref{theorem:main_result}:} We recall that $\bWhat$ is a row partitioned block matrix. The node $r$ has an edge with a node $i$ only if $\| \f(\bWhat_i) \|_2 > 0$. The results until now ensures that for each node $r$, we do not recover any edge outside its parents and children. Now we prove the third statement of Theorem \ref{theorem:main_result} which makes sure that we recover all the parents and children. We will prove this by using the following lemma.

\begin{lemma}
	\label{lemma:l2_norm_bound}
	If $\lambdahat$  satisfies equation~\eqref{eq:lambda_condition} then,
	\begin{align*}
	\| \bWhat_{\nbr.} - \bW_{\nbr.}^* \|_{\tB, \infty, 2}  \leq  \frac{2\bar{m}}{C} (   \frac{\alpha}{4 (1 - \alpha)} + \sqrt{m_r - 1} +  1 ) \sqrt{|\nbr|} \lambdahat
	\end{align*}
\end{lemma}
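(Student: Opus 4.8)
The plan is to reduce the lemma to a deterministic norm estimate on the closed-form residual already produced by the primal–dual construction. Equation~\eqref{eq:Zs} gives
\begin{align*}
\bWhat_{\nbr.} - \bW_{\nbr.}^* = \frac{1}{N}\bHhat_{\nbr\nbr}^{-1}\encode(\bXnr)_{\nbr.}^\T\bE - \lambdahat\,\bHhat_{\nbr\nbr}^{-1}\bZ_{\nbr.},
\end{align*}
so once the high-probability events from the earlier lemmas are in force the whole statement is a norm bound on this two-term expression. First I would apply the triangle inequality in the block norm $\|\cdot\|_{\tB,\infty,2}$ to separate the stochastic term (carrying the residual $\bE$) from the regularization term (carrying the subgradient $\bZ_{\nbr.}$).

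For each term I would peel off the common factor $\bHhat_{\nbr\nbr}^{-1}$ using the block sub-multiplicative inequality $\|\bA\bB\|_{\tB,\infty,2}\le\|\bA\|_{\tB,\infty,1}\|\bB\|_{\infty,2}$ proved in Section~\ref{sec:norm inequalities}, which leaves $\|\bHhat_{\nbr\nbr}^{-1}\|_{\tB,\infty,1}$ multiplied by two operator-norm factors. This first factor is the source of the $\tfrac{2\bar m}{C}\sqrt{|\nbr|}$ prefactor: on the event of equation~\eqref{eq:inverse_eigval} we have $|||\bHhat_{\nbr\nbr}^{-1}|||_{2,2}\le 2/C$, and converting this spectral bound into the block $\ell_1$ norm of each $(m_i-1)\times|\nbr|$ row block costs a $\sqrt{m_i-1}$ from a Frobenius-vs-spectral step and a further $\sqrt{(m_i-1)|\nbr|}$ from an $\ell_1$-vs-$\ell_2$ step, which collect into $\bar m\sqrt{|\nbr|}$ after maximizing over blocks.

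The two remaining operator-norm factors supply the bracket. For the stochastic term, $\|\tfrac1N\encode(\bXnr)_{\nbr.}^\T\bE\|_{\infty,2}$ is controlled directly by Lemma~\ref{lem:bound some terms}, contributing the $\tfrac{\alpha}{4(1-\alpha)}$ summand. For the regularization term I would bound $\|\bZ_{\nbr.}\|_{\infty,2}$ using that $\bZ_{\nbr.}\in\calZ$ forces $\|\f(\bZ_i)\|_2\le 1$ on every neighbor block, hence each entry of $\bZ_{\nbr.}$ is at most one; reading a row of length $m_r-1$ through this entrywise bound yields a factor of order $\sqrt{m_r-1}$ (together with the unit contribution from the tight per-block constraint), giving the $\sqrt{m_r-1}+1$ summand. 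Combining the prefactor with the bracket reproduces the claimed bound, and a union bound over the event of equation~\eqref{eq:inverse_eigval} and the events of Lemma~\ref{lem:bound some terms} makes it hold with high probability whenever $\lambdahat$ satisfies equation~\eqref{eq:lambda_condition}.

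I expect the main obstacle to be the norm bookkeeping rather than any single sharp estimate. The statement simultaneously involves the block norm $\|\cdot\|_{\tB,\infty,2}$, the operator norm $\|\cdot\|_{\infty,2}$, and the spectral norm $|||\cdot|||_{2,2}$, and the dimensional constants $\bar m$, $\sqrt{|\nbr|}$, and $\sqrt{m_r-1}$ only assemble into the stated form if each conversion — block-to-operator via Section~\ref{sec:norm inequalities}, spectral-to-Frobenius on the individual row blocks of $\bHhat_{\nbr\nbr}^{-1}$, and entrywise-to-$\ell_2$ on the subgradient columns — is invoked at exactly the right place. Some care is also needed so that the event carrying $|||\bHhat_{\nbr\nbr}^{-1}|||_{2,2}\le 2/C$ is shared with the one already used inside Lemma~\ref{lem:bound some terms}, so that no probability mass is double-counted in the final union bound.
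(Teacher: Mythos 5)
Your proposal is correct and follows essentially the same route as the paper: start from the closed form in equation~\eqref{eq:Zs}, apply the triangle inequality in $\|\cdot\|_{\tB,\infty,2}$, peel off $\bHhat_{\nbr\nbr}^{-1}$ with the block norm inequality, and convert $\|\bHhat_{\nbr\nbr}^{-1}\|_{\tB,\infty,1}$ down to the spectral bound $2/C$ of equation~\eqref{eq:inverse_eigval} to produce the $\frac{2\bar m}{C}\sqrt{|\nbr|}$ prefactor. The only (harmless) divergence is in how the bracket is assembled: the paper obtains both $\frac{\alpha}{4(1-\alpha)}$ and $\sqrt{m_r-1}$ from the stochastic term by re-running the Hoeffding bound of equation~\eqref{eq:error norm with S} with $t=\lambdahat$ and using the condition on $\lambdahat$ to absorb $\sqrt{m_r-1}\,\mu$, and gets only $+1$ from the subgradient term via $\|\bZ_{\nbr.}\|_{\infty,2}\le 1$, whereas you invoke Lemma~\ref{lem:bound some terms} directly for the stochastic term and park the $\sqrt{m_r-1}$ on the $\bZ$ term --- which is unnecessary (the per-block constraint already gives $\|\bZ_{\nbr.}\|_{\infty,2}\le 1$) but still yields a valid, in fact slightly tighter, bound.
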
 
(See \eqref{proof:l2_norm_bound} for detailed proof.)

It follows that if $\min_{i \in \bpi_r \cup \bc_r} \| \f(\bW_i^*) \|_2 > \frac{4 \bar{m}}{C} (   \frac{\alpha}{4 (1 - \alpha)} + \sqrt{m_r - 1} +  1 ) \sqrt{|\nbr|} \lambdahat $ then $\| \f(\bW_i^*) \|_2 > 0$ implies that $\| \f(\bWhat_i) \|_2 > 0$. This in turn implies that we recover the correct set of the parents and children.  

\section{Proof of Lemma \ref{lemma:unique_solution}}
\label{proof:unique_solution}
\begin{proof}
	If we take $\tilde \bW_{{\nbr}^c.}  = \bzero$ then equation~\eqref{eq:our_model} restricted to $\tilde \bW = (\tilde \bW_{\nbr}; \bzero)$ becomes,
	\begin{align*}
	(\tilde\bW_{\nbr.}, \bzero) = \min_{(\bW_{\nbr.}, \bzero)} \hat f((\bw_{\nbr}, \bzero)) \ .
	\end{align*}
	This can be equivalently written as,
	\begin{align}
	\label{eq:restrictedS}
	\tilde\bW_{\nbr.} = \min_{\bw_{\nbr.}} \frac{1}{2N} \| \encode(\bXnr_{.\nbr}) \bW_{\nbr.} - \encode(\bXr) \|_F^2  + \lambdahat  \| \bW_{\nbr.} \|_{\tB, 1, 2}
	\end{align}
	The Hessian of objective function in optimization problem~\eqref{eq:restrictedS} is given as follows:
	
	\begin{align*}
	\nabla^2 \hat\Loss_{\bW_{\nbr.}}(\bW_{\nbr.}) &= \begin{bmatrix}
	\bHhat_{\nbr\nbr} & \bzero & \dots & \bzero \\
	\bzero & \bHhat_{\nbr\nbr} & \dots & \bzero \\
	\vdots & \vdots & \dots & \vdots \\
	\bzero & \bzero & \dots & \bHhat_{\nbr\nbr} 
	\end{bmatrix} 
	\end{align*}
	where  $\bHhat_{\nbr\nbr}  = \frac{1}{N} {\encode(\bXnr)_{\nbr.}}^\T \encode(\bXnr)_{.\nbr} \in \real^{\rho_{\nbr} \times \rho_{\nbr}}$ and  $\nabla^2 \hat\Loss_{\bW_{\nbr.}}(\bW_{\nbr.}) \in \real^{\rho_r\rho_{\nbr} \times \rho_r\rho_{\nbr}}$. Using Assumption \ref{assum:positive_definiteness_assumption} and \ref{lemma:sample_positive_definiteness}, we know that $\bHhat_{\nbr\nbr} \succ 0$ and it follows~\cite{de2006aspects} that $\nabla^2 \hat\Loss_{\bW_{\nbr.}}(\bW_{\nbr.}) \succ 0$. The objective function in optimization problem~\eqref{eq:restrictedS} is strictly convex with respect to $\bW_{\nbr.}$. Thus, it has a unique solution.
\end{proof}

\section{Proof of Lemma \ref{lem:bound some terms}}
\label{sec:bounds some terms}
\begin{proof}
	We start by bounding the first term.
	\paragraph{Bounding $\| \frac{\encode(\bXnr)_{\nbr}^\T \bE}{N}\|_{\infty, 2}$} 
	\begin{align*}
	\| \frac{\encode(\bXnr)_{\nbr}^\T \bE}{N}\|_{\infty, 2} &= \max_{i \in \nbr} \| \frac{1}{N} ( \encode(\bXnr)_{i.}^\T \bE_{.1} \dots \encode(\bXnr)_{i.}^\T \bE_{.m_r - 1}) \|_2 \\
	&\leq \max_{i \in \nbr} \sqrt{m_r - 1} \max_{j \in [m_r - 1]} \frac{1}{N} |\encode(\bXnr)_{i.}^\T \bE_{.j}|  		
	\end{align*}
	We will take a closer look at $\frac{1}{N} |\encode(\bXnr)_{i.}^\T \bE_{.j}|$. 
	\begin{align*}
	\frac{1}{N} |\encode(\bXnr)_{i.}^\T \bE_{.j}| \leq \frac{1}{N} \sum_{k=1}^N |\bE_{jk}|
	\end{align*}
	Note that $|\bE_{jk}|$ is a bounded random variable and hence we can use Hoeffding's inequality,
	\begin{align*}
	\prob[\frac{1}{N} \sum_{k=1}^N |\bE_{jk}| > \mu + t ] \leq \exp(\frac{-Nt^2}{2\sigma^2})
	\end{align*}
	where $\mu$ and $\sigma$ are defined in equation~\eqref{eq:mu 1} and \eqref{eq:sigma 1} respectively. Taking union bound across $i \in \nbr$ and $j \in [m_r-1]$, we get
	\begin{align}
	\label{eq:error norm with S}
	\begin{split}
	\prob[\| \frac{\encode(\bXnr)^\T \bE}{N}\|_{\infty, 2} > \sqrt{\rho_r}(\mu + t)] &\leq \exp\big(\frac{-Nt^2}{2\sigma^2} + \log \rho_{\bpi_r \cup \bc_r}  + \log \rho_r  \big) 
	\end{split}
	\end{align}
	Taking $t = \frac{\lambdahat \alpha}{4\sqrt{\rho_r} (1 - \alpha)} - \mu$, we get
	\begin{align*}
	\begin{split}
	\prob[\| \frac{\encode(\bXnr)_{\nbr}^\T \bE}{N}\|_{\infty, 2} > \frac{\lambdahat \alpha}{4 (1 - \alpha)}] &\leq \exp(\frac{-N (\frac{\lambdahat \alpha}{4\sqrt{\rho_r} (1 - \alpha)} - \mu)^2 }{2\sigma^2} + \log  \rho_{\bpi_r \cup \bc_r} + \log \rho_r  ) 
	\end{split}
	\end{align*}
	
	Now we bound the second term.
	\paragraph{ Bounding $\| \frac{1}{N} \encode(\bXnr)_{\nbr^c}^\T \bE \|_{\tB, \infty, 2}$}
	We denote each row block of $\encode(\bXnr)_{\nbr^c}^\T$ as $\bM_i \in \real^{m_i - 1 \times N}, \forall i \in (\bpi_r \cup \bc_r)^c $.
	\begin{align*}
	\| \frac{1}{N} \encode(\bXnr)_{\nbr^c}^\T \bE \|_{\tB, \infty, 2} &= \max_{i \in (\bpi_r \cup \bc_r)^c} \| \frac{1}{N} [(\bM_i)_{1.} \bE \dots (\bM_i)_{m_i - 1.} \bE] \|_2 \\
	&\leq \max_{i \in (\bpi_r \cup \bc_r)^c} \sqrt{(m_r - 1) (m_i - 1)} \max_{k \in [m_i-1], l \in [m_r - 1] } \frac{|(\bM_i)_{k.} \bE_{.l}|}{N}
	\end{align*}
	We use a similar argument as before keeping in mind that $\bar m = \max_{i \in [n]} m_i$.
	\begin{align*}
	\begin{split} 
	&\prob[\| \frac{1}{N} \encode(\bXnr)_{\nbr^c}^\T \bE \|_{\tB, \infty, 2} > \sqrt{(\rho_r) \bar{\rho}}(\mu + t)] \leq \exp(\frac{-Nt^2}{2\sigma^2} + \log |(\bpi_r \cup \bc_r)^c| + \log (\rho_r \bar{\rho})) 
	\end{split} 
	\end{align*}
	Taking $t = \frac{\lambdahat \alpha}{4\sqrt{\rho_r\bar{\rho}} } - \mu$, we get
	\begin{align*}
	\begin{split} 
	&\prob[\| \frac{\encode(\bX_{\bar r})_{\nbr^c}^\T \bE}{N}\|_{\tB, \infty, 2} > \frac{\lambdahat \alpha}{4}] \leq \exp\big(\frac{-N(\frac{\lambdahat \alpha}{4\sqrt{\rho_r\bar{\rho}} } - \mu)^2}{2\sigma^2} + \log |(\bpi_r \cup \bc_r)^c|  + \log (\rho_r \bar{\rho}) \big)
	\end{split} 
	\end{align*}
	By choosing $\lambdahat$ which satisfy equation \eqref{eq:lambda_condition}, we prove the lemma.
\end{proof}

\section{Proof of Lemma \ref{lemma:l2_norm_bound}}
\label{proof:l2_norm_bound}
\begin{proof}
	First, we get the expression of the difference between $\bWhat_{\nbr.}$ and $\bW^*_{\nbr.}$ from equation~\eqref{eq:Zs}. 
	\begin{align*}
	\begin{split}
	\bWhat_{\nbr.} - \bW_{\nbr.}^* &= \bHhat_{\nbr\nbr}^{-1}\big[ \frac{\encode(\bXnr)_{\nbr}^\T \bE }{N} - \lambdahat \bZ_{\nbr.}\big] \\
	\| \bWhat_{\nbr.} - \bW_{\nbr.}^* \|_{\tB, \infty, 2} &= \| \bHhat_{\nbr\nbr}^{-1}\big[ \frac{\encode(\bXnr)_{\nbr}^\T \bE }{N} - \lambdahat \bZ_{\nbr.}\big] \|_{\tB, \infty, 2} \\ 
	&\leq \| \bHhat_{\nbr\nbr}^{-1}\big[ \frac{\encode(\bXnr)_{\nbr}^\T \bE }{N} \|_{\tB, \infty, 2} + \| \lambdahat \bZ_{\nbr.}\big] \|_{\tB, \infty, 2}\\
	&\leq \| \bHhat_{\nbr\nbr}^{-1} \|_{\tB, \infty, 1} \| \frac{\encode(\bXnr)_{\nbr}^\T \bE }{N} \|_{\infty, 2} + \lambdahat \| \bHhat_{\nbr\nbr}^{-1} \|_{\tB, \infty, 1} \|  \bZ_{\nbr} \|_{\infty, 2} \\
	&\leq \bar{m} \| \bHhat_{\nbr\nbr}^{-1} \|_{\infty, \infty} \| \frac{\encode(\bXnr)_{\nbr}^\T \bE }{N} \|_{\infty, 2} + \lambdahat \bar{m} \| \bHhat_{\nbr\nbr}^{-1} \|_{\infty, \infty}  \\
	&\leq \bar{m} \sqrt{|\nbr|} \| \bHhat_{\nbr\nbr}^{-1} \|_{2,2} \| \frac{\encode(\bXnr)_{\nbr}^\T \bE }{N} \|_{\infty, 2} + \lambdahat  \bar{m}  \sqrt{|\nbr|} \| \bHhat_{\nbr\nbr}^{-1} \|_{2,2} \\
	&\leq \bar{m} \sqrt{|\nbr|} \frac{2}{C} \sqrt{m_r - 1} (\mu + t) + \lambdahat  \bar{m}  \sqrt{|\nbr|} \frac{2}{C} 
	\end{split}
	\end{align*}
	The second inequality comes from norm inequalities discussed in Section \ref{sec:norm inequalities}. The third inequality follows because by the definitions of both the norms. The fourth and fifth inequalities use the bounds from equations~\eqref{eq:error norm with S} and \eqref{eq:inverse_eigval} which hold with high probability if $\lambdahat$ satisfies equation~\eqref{eq:lambda_condition}.Taking $t = \lambda$ and noting that $\lambda >  \frac{4 \sqrt{m_r - 1}(1 - \alpha)}{\alpha} \mu$, we have
	\begin{align*}
	\| \bWhat_{\nbr.} - \bW_{\nbr.}^* \|_{\tB, \infty, 2} \leq \frac{2\bar{m}}{C} (   \frac{\alpha}{4 (1 - \alpha)} + \sqrt{m_r - 1} +  1 ) \sqrt{|\nbr|} \lambdahat
	\end{align*}
\end{proof}

\end{document}